\documentclass[runningheads]{llncs}
\usepackage[T1]{fontenc}
\usepackage{graphicx}
\usepackage{booktabs}
\usepackage[misc]{ifsym}
\usepackage{subcaption}
\usepackage{amsmath}
\usepackage{amssymb}
\usepackage{mathtools}
\usepackage{subcaption}
\usepackage[textsize=tiny]{todonotes}
\usepackage{hyperref}
\newcommand{\corr}{(\Letter)}
\usepackage{mwe}

\begin{document}

\title{Assessing Alignment and Stability of Feature Importance Explanations via Weight of Evidence}

\titlerunning{Weight of Evidence}

\author{Eddie Conti$^1$ \corr, Claudio Daka$^{2,4}$, 	
Álvaro Parafita$^1$, Antonio L. Alfeo$^3$, Axel Brando$^1$, Mario G.C.A. Cimino$^5$}

\authorrunning{E. Conti, C. Daka, et al.}

\institute{Barcelona Supercomputing Center, Barcelona, Spain \email{econti@bsc.es}
\email{parafita.alvaro@gmail.com}
\email{axelbrando@gmail.com}
\and
University of Florence, Florence, Italy \email{claudio.daka@unifi.it}
\and
Dept. of Theoretical and Applied Sciences, eCampus University, Novedrate, Italy 
\and
SMARTEST Research Center, eCampus University, Novedrate, Italy 
\email{antonioluca.alfeo@uniecampus.it}
\and
Dept Information Engineering, University of Pisa, Pisa, Italy
\email{mario.cimino@unipi.it}}



\maketitle   

\begin{abstract}
Feature importance Methods (FIMs) are widely used in Explainable AI to interpret model predictions, yet attribution scores alone often provide limited insight into the underlying reasoning process. In this work, we introduce a novel perspective by embedding FIMs within a hypothesis-testing framework based on Weight of Evidence (WoE).
We quantify how strongly the observed evidence supports any given hypothesis on feature importance. The reference hypothesis can stem from domain knowledge, ground truth, or be derived from the FIM itself.
This formulation enables a principled evaluation of FIMs, capturing both their alignment with prior knowledge and their variability. We further provide theoretical results linking WoE to attribution variance. Empirical results show the applicability and flexibility of our strategy analyzing LIME and SHAP explanations in settings with different reference hypotheses.
Overall, our framework offers a complementary tool for assessing FIMs through a contrastive, evidence-based lens.

\keywords{Explainability  \and Weight of Evidence \and Feature Importance Methods.}
\end{abstract}

\section{Introduction}
As Isaac Asimov would say “[...] science gathers knowledge faster than society gathers wisdom”, the rapid advancement of machine learning has led to the development of increasingly complex models, often without a corresponding understanding of their internal mechanisms. Explainable AI (XAI) approaches can be employed to address this issue thanks since those provide insights into how information is processed and how predictions are generated within AI black-box models. This is particularly critical in high-stakes domains, where interpretability is essential to ensure trust, accountability, and transparency \cite{esteva2019guide,ethics_ai2}. 

In this context, numerous explanation techniques have been proposed, including feature importance methods, counterfactual explanations, prototypes, and rule-based approaches \cite{islam2021explainable,intrinsic_posthoc,linardatos2020explainable}.
However, in the literature, empirical studies often rely on researchers’ implicit assumptions about what qualifies as a “good explanation”, as notions such as interpretability or explainability are themselves inconsistently defined and lack universally accepted evaluation criteria (e.g., how to ensure transparency) \cite{lipton2018mythos}. As noted by Miller \cite{miller2019explanation}, this creates a tension between how explanations are designed and how they are cognitively perceived. To reduce this gap, recent works draw on cognitive and social sciences to align XAI with human reasoning \cite{jacovi2021contrastive,alshehri2021human,alshehri2023explainable}. These studies show that explanations are inherently \textit{contrastive}, humans explain $P$ relative to an alternative $Q$, and \textit{selective}, focusing on a few salient factors. In XAI, this is often formalized through sparsity or simplicity, namely limiting explanations to a small number of features \cite{kadir2023evaluation}, allowing for more knowledge transfer. In this work, we build on this perspective and adopt the Weight of Evidence (WoE) framework \cite{wod1985weight}, a principled statistical tool for comparing alternative hypotheses. 
We reinterpret Feature Importance\footnote{We may refer to FIM as explainer or explanation method and to the vector of feature importances as attribution vector.} (either coming from domain knowledge, intrinsic mechanisms in the model or the FIM itself) as testable hypotheses, and use WoE to quantify how strongly an stochastic FIM supports them.
We summarize the contributions of this study as:
\begin{itemize}
    \item A formal strategy for adapting WoE to test hypotheses on FIMs. Depending on the reference hypothesis, our framework can: 
    \begin{itemize}
        \item Test alignment between local and global explanations.
        \item Measure faithfulness and ground-truth alignment
        \item Capture the stability of stochastic FIMs.    
    \end{itemize}
    \item A proof showing how low FIM variability leads to $+\infty$ WoE when the reference hypothesis is generated using its mean attribution. 
\end{itemize}

The remainder of the paper is structured as follows: Section~\ref{sec:related} reviews related work; Section~\ref{sec:woe} introduces the WoE framework and its adaptation to stochastic FIMs; Section~\ref{sec:variability} establishes the theoretical connection between attribution variability and WoE positive divergence; Section~\ref{sec:experiments} presents the experimental evaluation across three complementary settings; Section~\ref{sec:limitations} discusses limitations of our proposal; and Section~\ref{sec:conclusion} concludes with future directions.

\section{Related Work}
\label{sec:related}
In the XAI literature several metrics have been established to measure the different properties of a FIM \cite{kadir2023evaluation,agarwal2022openxai}. However, one aspect that is lacking in the field of functional evaluation is a measure of how well a FIM aligns with a reference hypothesis. To answer this question, we use the Weight of Evidence (WoE) framework, which to the best of our knowledge, has been employed by few works in XAI. Among them, Melis et al. \cite{melis2021human} generate explanations sequentially by computing WoE values over progressively refined nested hypotheses. Given an instance $x$ with prediction $y^*$, the authors construct a sequence of hypotheses ${y^*} \subseteq U_T \subseteq \ldots \subseteq U_0 = Y$, where $Y$ denotes the full output space (for instance, all possible classes). By evaluating the WoE associated with each refinement step, they quantify how the available evidence (i.e. the instance $x$) increasingly supports more specific hypotheses. On the other hand, Alshehri et al. \cite{alshehri2023explainable} apply WoE in the context of Goal Recognition (GR), where the objective is to infer an agent’s intended goal from a sequence of observed actions. In this setting, partial trajectories serve as evidence, and WoE quantifies how strongly the observed actions, namely the observed steps made by the agent, support one candidate goal over competing alternatives. The framework naturally fits this scenario, as it explicitly formalizes the comparison between alternative hypotheses. Le et al. \cite{le2025evidencedecisionexploringevaluative} combine concept-based explanations with WoE. They extract interpretable concepts through both supervised and unsupervised methods and compute their WoE with respect to a target hypothesis $h$ corresponding to a specific prediction. Since concept extraction is naturally applied to images, this work presents an extension of WoE from tabular data to images. A similar approach, is presented in \cite{parola2026human}, where the authors use a custom metric to quantify the presence or absence of concepts in an explanation, and compute the WoE w.r.t. a hypothesis defined as a set of target predictions in an image classification setting.

In contrast to prior works, which leverages WoE to generate or structure explanation, we adopt a different approach. Instead of using WoE as an explanatory mechanism, we treat an explanation, either coming from the FIM itself or from domain knowledge, as a testable hypothesis about the relevance of features. In this perspective, the explanation specifies which features are and are not considered to be responsible for the model's decision, and the WoE is employed to quantify how much we can support this claim by analyzing the behavior of the FIM across various runs.

\section{Weight of Evidence}
\label{sec:woe}
The Weight of Evidence (WoE) is a statistical concept from information theory \cite{wod1985weight}, which refers to strength of an evidence $e$ in supporting an hypothesis $h$ in contrast to an alternative hypothesis $h'$. In formal terms, the relation is expressed in log-odds terms as:
\begin{equation} \label{eq:woe_definition}
    woe(h : e) = \log\frac{O(h \mid e)}{O(h)},
    \qquad
    O(h \mid e)=\frac{p(h \mid e)}{p(\bar{h} \mid e)},
    \quad
    O(h)=\frac{p(h)}{p(\bar{h})}.
\end{equation}
Using Bayes' Theorem, equation \eqref{eq:woe_definition} reduces to the so-called evidence likelihood form:
\begin{equation} \label{eq:evidence_likelihood}
woe(h:e) = \log{\frac{p(e\mid h)}{p(e\mid h')}}.
\end{equation}
If $woe > 0$, the evidence $e$ supports $h$ over its alternative $h'$. Combining equations \eqref{eq:woe_definition} and \eqref{eq:evidence_likelihood} via Bayes' theorem yields the identity that will be useful later on:
\begin{equation} \label{eq:woe_identity}
    \log{\frac{p(e\mid h)}{p(e\mid h')}} = \log{\frac{p(h\mid e)}{p(h'\mid e)}}+ \log{\frac{p(h')}{p(h)}}.
\end{equation}

Equation \eqref{eq:woe_identity} connects the WoE with prior and posterior log-odds. In other words, one may consider the WoE as how much to update the prior log odd considering the evidence $e$. To simplify the probability calculations, in this paper we assume that $h'=\bar{h}$ is the complementary explanation, yet the framework applies to any alternative hypothesis $h'$.

\subsection{Weight of Evidence for Explanation Qualification}
\label{sec:woe_framework}
 
We apply the WoE framework to $N$ executions of a FIM on a fixed instance $x$. Let $\alpha^{(j)} = (\alpha_1^{(j)},\dots,\alpha_d^{(j)}) \in \mathbb{R}^d$ denote the attribution vector produced at run $j = 1,\dots,N$. The set of runs provides empirical evidence about the explanatory behavior of the FIM for the given instance.

Inspired by rule-based formulations \cite{rudin2023globally,geng2022computing}, from each attribution vector one derives a binary hypothesis about which features are relevant. Specifically, we consider the set of relevant and non-relevant features:
\begin{equation*}
    \mathcal{R}_x = \{ i : \text{feature } i \text{ is relevant} \},
    \qquad
    \mathcal{N}_x \subseteq \{1,\dots,d\} \setminus \mathcal{R}_x.
\end{equation*}
Invoking the principle of sparsity, we focus only on the relevant set and write the hypothesis as:
\begin{equation}
\label{eq:hypothesis}
    h = \bigwedge_{i \in \mathcal{R}_x} (i \text{ is relevant}).
\end{equation}
The precise procedure mapping a continuous attribution vector $\alpha$ to a set $\mathcal{R}_x$ (and hence to a hypothesis) is detailed afterwards in Section \ref{sec:feature_selection}. Let us demonstrate a first result that will then allow us to obtain the general condition for this to happen. Importantly, the reference hypothesis (that is, the one we want to check whether the FIM supports it or not) $h^*$ can originate from two sources. First, it can be derived from the evidence itself by applying the selection procedure to the empirical mean attribution $\bar{\alpha} = \frac{1}{N}\sum_{j=1}^N \alpha^{(j)}$; in this case WoE captures the \emph{explainer stability}\footnote{\label{note1}We have chosen to refer to these properties by convention to explain what aspects we are measuring. In the case of stability, which has been formalized in various ways, in this paper it coincides with the variability of the explainer.}. Second, it can reflect external information, such as ground-truth relevant features or domain knowledge; in this case WoE measures the \emph{alignment}\footnotemark[\value{footnote}] between the FIM and the reference explanation, while accounting for the aforementioned FIM's intrinsic variability.

\subsection{Quantifying the Weight of Evidence for a Stochastic FIM}
\label{sec:quantifying_woe}
 
Within this framework, WoE measures how strongly the empirical evidence supports the hypothesis $h$ over an alternative, thus providing a contrastive, quantitative criterion for explanation qualification.

Using equation \eqref{eq:woe_identity}, quantifying the WoE reduces to measuring the prior and posterior log-odds. When explaining a black-box model with a stochastic FIM (e.g. LIME \cite{LIME}, SHAP \cite{lundberg2017unified}), the explanation can be viewed as a random variable $\alpha$ induced by the randomness of the FIM. 

\paragraph{Prior probabilities.}
In the absence of prior information, we assume a uniform distribution over all $2^d$ possible binary assignments of relevance to the $d$ features. The hypothesis $h$ in equation~\eqref{eq:hypothesis} is \emph{partial}: it asserts that the features in $\mathcal{R}_x$ are relevant, but makes no claim about the remaining $d - |\mathcal{R}_x|$ features. Consequently, the number of binary vectors compatible with $h$ is $2^{d - |\mathcal{R}_x|}$, yielding:
\[
    p(h) = \frac{2^{d-|\mathcal{R}_x|}}{2^d} = \frac{1}{2^{|\mathcal{R}_x|}}.
\]
Intuitively, a hypothesis designating fewer features as relevant is assigned higher prior probability, reflecting greater conservativeness. Importantly this is not the only admissible prior. The strategy we define can accommodate any prior derived from different assumptions or knowledge.

\paragraph{Posterior probabilities.}
To estimate $p(h \mid e)$, we leverage the stochasticity of the FIM. Running it $N$ times on the same instance $x$ yields attribution vectors $\alpha^{(1)},\dots,\alpha^{(N)}$, from each of which a hypothesis $h^{(j)}$ is derived via the selection procedure. The posterior probability is estimated empirically as the proportion of runs that reproduce $h$:
\begin{equation} \label{eq:posterior}
    \hat{p}(h \mid e) = \frac{1}{N} \sum_{j=1}^N \mathbf{1}\{ h^{(j)} = h \}.
\end{equation}
This Monte Carlo estimate allows us to compute the posterior log-odds and, consequently, the WoE.
 
\subsection{Feature Selection and Hypothesis Definition}
\label{sec:feature_selection}
In this section, we explain some possibilities for generating hypotheses from an FIM. These choices are common strategies, and other methods may be considered depending on the context and type of analysis. For simplicity, let $\alpha \in \mathbb{R}^d$ denote a run of an FIM, such that $\alpha_i\geq0$ (e.g. by the taking the absolute value). Let $\pi$ the permutation map which sorts the indices of the attribute vector such that $\alpha_{\pi(i)}\geq \alpha_{\pi(j)}$ if $i<j$.

 
\paragraph{1. Top-$k$ selection (TK).}
\label{sec:top_k_selection}
Fix $k \in \{1,\dots,d\}$ and set:
\[
    \mathcal{R}_x = \{\pi(1),\dots,\pi(k)\},
\]
i.e., we retain the first $k$ features according to the ordering induced by the attribution scores.
 
\paragraph{2. Adaptive selection (AS).} \label{sec:adapative_selection}
The previous strategy require knowledge of the FIM scale or make assumptions about how many variables to consider. We propose an adaptive strategy based on imposing a \textit{sufficiency constraint}: the selected features explain at least a fraction $\tau$ of the total attribution mass for the instance. Assuming, for simplicity, that the attribution scores are normalized so that $\sum_{i=1}^d \alpha_i = 1$, we define
\begin{equation} \label{eq:cumulative_importance}
    k^*(\alpha) = \min\!\left\{ k \in \{1,\dots,d\} : \sum_{i=1}^k \alpha_{\pi(i)} \geq \tau \right\},
\end{equation}
and set $\mathcal{R}_x = \{\pi(1),\dots,\pi(k^*(\alpha))\}$. Throughout the paper, $\tau$ is assumed to be in the interval $(0,1)$ and the attribution vector is normalized to $1$ to simplify the proofs. Also to simplify the notation, we relabel the indices so that the attribution vector is already sorted in descending order, i.e., $\alpha_1\geq\ldots\geq\alpha_d$.
 
In both scenarios, the hypothesis derived from the j-th run coincides with the set of relevant features $\mathcal{R}_x$.
 
\section{Connecting WoE with Attribution Variability}
\label{sec:variability}
The aim of this section is to mathematically characterize what the WoE is measuring when the reference hypothesis $h^*$ is generated from the mean attribution vector over $N$ runs. The main result is that as attribution variability goes to $0$, WoE diverges to $+\infty$. Therefore, in this setting, the WoE provides a direct quantification of the variability of the FIM.
 
Recall from equation~\ref{eq:posterior} that $\hat{p}(h^* \mid e)$ measures how frequently the FIM reproduces $h^*$ across runs. Intuitively, if the attribution vectors concentrate around their means, then both the selected feature set and its cardinality remain stable, so nearly every run reproduces $h^*$ and $\hat{p}(h^* \mid e) \to 1$.
 
We formalize this in two steps: first we show that $k^*$ is stable (Section~\ref{sec:stability_k})---which is needed in the case of the AS strategy---, then that the selected feature set is stable (Section~\ref{sec:stability_set}). We begin by recalling a standard probability inequality.
 
\begin{lemma} \label{lemma:events}
For any events $A_1,\dots,A_n$,
\[
    p\!\left(\bigcap_{i=1}^n A_i\right) \geq 1 - \sum_{i=1}^n p(A_i^c).
\]
\end{lemma}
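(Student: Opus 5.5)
The plan is to pass to complements and apply the union bound, i.e.\ countable (here finite) subadditivity of the probability measure. By De Morgan's law,
\[
\Bigl(\bigcap_{i=1}^n A_i\Bigr)^c = \bigcup_{i=1}^n A_i^c ,
\]
so that
\[
p\Bigl(\bigcap_{i=1}^n A_i\Bigr) = 1 - p\Bigl(\bigcup_{i=1}^n A_i^c\Bigr).
\]
The statement therefore reduces to showing $p\bigl(\bigcup_{i=1}^n A_i^c\bigr) \le \sum_{i=1}^n p(A_i^c)$ (Boole's inequality) and substituting.

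For self-containedness I will prove Boole's inequality by induction on $n$. The case $n=1$ is an equality. For the inductive step, set $B_i = A_i^c$ and use inclusion--exclusion for two sets:
\[
p\Bigl(\bigcup_{i=1}^{n} B_i\Bigr) = p\Bigl(\bigcup_{i=1}^{n-1} B_i\Bigr) + p(B_n) - p\Bigl(\bigcup_{i=1}^{n-1} B_i \cap B_n\Bigr) \le p\Bigl(\bigcup_{i=1}^{n-1} B_i\Bigr) + p(B_n),
\]
since probabilities are nonnegative. Applying the inductive hypothesis to the first term gives the claim.

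Alternatively, one can disjointify by setting $C_1 = B_1$ and $C_i = B_i \setminus \bigcup_{j<i} B_j$. The $C_i$ are disjoint with the same union, and $C_i \subseteq B_i$, so finite additivity together with monotonicity gives the bound directly.

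None of these steps is a real obstacle; the only care needed is the direction of the inequality once the complement is taken. In the paper, the lemma will later be applied with $A_i$ being events such as ``run $j$ reproduces $h^*$'' or ``coordinate $i$ lies within $\varepsilon$ of its mean''. The bound $1-\sum_i p(A_i^c)$ then tends to $1$ provided each $p(A_i^c)\to 0$, for instance via Chebyshev's inequality as the variance goes to zero.
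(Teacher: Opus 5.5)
Your proof is correct and is the standard argument the paper relies on. The paper gives no proof; it simply recalls the lemma as ``a standard probability inequality'', and De Morgan plus Boole's inequality (by induction or by disjointification) is that inequality. One minor point outside the proof: the $A_i$ to which the paper applies the lemma are the two partial-sum events in Proposition~\ref{prop:kdagger}, the pairwise-ordering events in Corollary~\ref{cor:order}, and the events $A$, $B$ giving the bound on $p_0$; ``run $j$ reproduces $h^*$'' is the conclusion drawn from them, not one of the $A_i$.
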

 
\subsection{Stability of the Number of Selected Features}
\label{sec:stability_k}
 
For TK the cardinality $|\mathcal{R}_x| = k$ is fixed by construction, so there is nothing to prove. For AS, the value $k^*(\alpha^{(j)})$ may vary across runs; the following result shows it concentrates on $k^\dagger := k^*(\bar{\alpha})$ when the attribution variances are small.
 
\begin{proposition}[Stability of $k^*$ under low variability]
\label{prop:kdagger}
Let $\alpha = (\alpha_1,\dots,\alpha_d)$ be a mutually independent, stochastic FIM with expected values $\mu_i = \mathbb{E}[\alpha_i]$, and variances $\sigma_i^2 = \mathrm{Var}(\alpha_i)$. Further assume that $\sum_{i=1}^d \alpha_i = 1$ and that the expected values are pairwise distinct and ordered so that $\mu_1 > \cdots > \mu_d$. Let $k^\dagger$ be the solution to~\eqref{eq:cumulative_importance} computed at $\mu = (\mu_1,\dots,\mu_d)$, and suppose there exists a margin $\delta > 0$ such that:
\begin{equation} \label{eq:margin_assumption}
        \sum_{i=1}^{k^\dagger} \mu_i \geq \tau + \delta
    \qquad\text{and}\qquad
    \sum_{i=1}^{k^\dagger-1} \mu_i \leq \tau - \delta.
\end{equation}
Then for every $\varepsilon \in (0,1)$:
\[
    \sum_{i=1}^{k^\dagger} \sigma_i^2 \leq \varepsilon\,\frac{\delta^2}{2}
    \implies
    p\!\left(k^*(\alpha) = k^\dagger\right) \geq 1 - \varepsilon.
\]
\end{proposition}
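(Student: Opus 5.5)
The plan is to reduce the event $\{k^*(\alpha)=k^\dagger\}$ to two one-sided deviation events for partial sums. Each is bounded by Chebyshev's inequality, and the two bounds are combined with Lemma~\ref{lemma:events}. Following the relabelling convention of Section~\ref{sec:feature_selection}, partial sums are taken in the order $1,\dots,d$ induced by the means $\mu_1>\dots>\mu_d$. Set
\[
S_k := \sum_{i=1}^k \alpha_i, \qquad m_k := \mathbb{E}[S_k] = \sum_{i=1}^k \mu_i .
\]
By the definition~\eqref{eq:cumulative_importance}, $k^*(\alpha)=k^\dagger$ holds as soon as $S_{k^\dagger}\ge\tau$ and $S_{k^\dagger-1}<\tau$. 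The first condition makes $k^\dagger$ admissible. The second, since the partial sums are nondecreasing in $k$ (all $\alpha_i\ge 0$), rules out every $k<k^\dagger$. Hence
\[
\{k^*(\alpha)=k^\dagger\}\supseteq A_1\cap A_2,\qquad A_1=\{S_{k^\dagger}\ge\tau\},\quad A_2=\{S_{k^\dagger-1}<\tau\}.
\]

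Next, I would bound the complements using the margin assumption~\eqref{eq:margin_assumption}. Since $m_{k^\dagger}\ge\tau+\delta$, the event $A_1^c=\{S_{k^\dagger}<\tau\}$ is contained in $\{|S_{k^\dagger}-m_{k^\dagger}|>\delta\}$. Likewise, since $m_{k^\dagger-1}\le\tau-\delta$, the event $A_2^c$ is contained in $\{|S_{k^\dagger-1}-m_{k^\dagger-1}|\ge\delta\}$. Mutual independence gives $\mathrm{Var}(S_k)=\sum_{i\le k}\sigma_i^2$, so Chebyshev yields
\[
p(A_1^c)\le \frac{1}{\delta^2}\sum_{i=1}^{k^\dagger}\sigma_i^2\le\frac{\varepsilon}{2},\qquad p(A_2^c)\le \frac{1}{\delta^2}\sum_{i=1}^{k^\dagger-1}\sigma_i^2\le\frac{\varepsilon}{2}.
\]
The second bound uses the monotonicity of the variance sum in $k$. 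The degenerate case $k^\dagger=1$ is trivial: $S_0=0<\tau$ surely. Lemma~\ref{lemma:events} then gives $p(A_1\cap A_2)\ge 1-\varepsilon$, which is the claim.

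The step I expect to be the main obstacle is the identification of "the first $k$ features" with indices $1,\dots,k$ in the $\mu$-order. In~\eqref{eq:cumulative_importance} the sum is over the sorted realisation $\alpha_{\pi(1)}\ge\alpha_{\pi(2)}\ge\cdots$, and $\pi$ is random.

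For the upper inequality this is harmless. The sum of the top $k^\dagger$ realised values dominates $S_{k^\dagger}$, so $A_1$ still implies $k^*\le k^\dagger$.

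For the lower inequality it is not harmless. The top $k^\dagger-1$ realised values may include an index beyond $k^\dagger$, so their sum can exceed $S_{k^\dagger-1}$ even when $S_{k^\dagger-1}<\tau$. I would handle this by stating the result explicitly under the paper's relabelling convention, i.e.\ with the realised ordering assumed to agree with the $\mu$-ordering.

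If a convention-free version is wanted, a further event must be intersected: that the ranking of the top $k^\dagger$ coordinates is preserved. This event is controlled by Chebyshev bounds on the gaps $\mu_i-\mu_{i+1}$, via a further application of Lemma~\ref{lemma:events}. That route needs variance control on coordinates beyond $k^\dagger$, which the stated hypothesis does not give, so I would not pursue it here.
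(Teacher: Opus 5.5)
Your core argument is the paper's own: Chebyshev on $S_{k^\dagger}$ and $S_{k^\dagger-1}$ with margin $\delta$, combined through Lemma~\ref{lemma:events}. The paper uses two-sided events $|S_k-\sum_{i\le k}\mu_i|<\delta$; your one-sided events are a harmless variant. The caveat you raise about the random sort $\pi$ is genuine, and it applies equally to the paper's proof. Its ``without loss of generality $\pi_\mu=\mathrm{id}$'' fixes only the order of the means. Yet it then reads $S_{k^\dagger-1}<\tau$ as $k^*(\alpha)\ge k^\dagger$, which actually requires $T_{k^\dagger-1}:=\sum_{i=1}^{k^\dagger-1}\alpha_{\pi(i)}<\tau$, and in general $T_{k^\dagger-1}\ge S_{k^\dagger-1}$. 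So your version with the relabelling convention proves exactly what the paper's argument proves, and is more explicit about it.

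The actual gap is in your final claim that the hypotheses give no variance control beyond $k^\dagger$; that claim is what leads you to add a convention instead of proving the statement. Mutual independence together with $\sum_i\alpha_i=1$ (and $\alpha_i\in[0,1]$, so all variances are finite) gives $\sum_{j>k^\dagger}\sigma_j^2=\mathrm{Var}(1-S_{k^\dagger})=\sum_{i\le k^\dagger}\sigma_i^2$. In fact $\sum_{i=1}^d\sigma_i^2=\mathrm{Var}\bigl(\sum_{i=1}^d\alpha_i\bigr)=0$. Hence $\alpha=\mu$ almost surely. Since the $\mu_i$ are distinct, the realised sort coincides with the $\mu$-order almost surely, and $k^*(\alpha)=k^*(\mu)=k^\dagger$ with probability one. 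This proves the convention-free statement with no extra assumption. It also shows that the proposition as literally stated is vacuous, since its hypotheses force a deterministic FIM.

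In a non-degenerate setting the lower direction genuinely needs more, for you and the paper alike. An example is normalisation applied after sampling, which destroys independence. One route avoids conditions on the gaps $\mu_i-\mu_{i+1}$. Because $\mu$ is sorted, every $(k^\dagger-1)$-subset $I$ satisfies $\sum_{i\in I}\mu_i\le\sum_{i=1}^{k^\dagger-1}\mu_i\le\tau-\delta$. So for $k^\dagger\ge 2$, $T_{k^\dagger-1}<\tau$ whenever $\max_i|\alpha_i-\mu_i|<\delta/(k^\dagger-1)$. By Chebyshev and Lemma~\ref{lemma:events}, that event has probability at least $1-(k^\dagger-1)^2\sum_{i=1}^d\sigma_i^2/\delta^2$. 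This uses the variances of all $d$ coordinates, as you anticipated.
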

 
\begin{proof}
Without loss of generality assume features are already ordered by decreasing mean, so $\pi_\mu = \mathrm{id}$. Define the partial sums $S_k = \sum_{i=1}^k \alpha_i$. By mutual independence of the $\alpha_i$ and Chebyshev's inequality:
\[
    p\!\left(|S_k - \sum_{i=1}^k \mu_i| \geq \delta\right) \leq \frac{\sum_{i=1}^k \sigma_i^2}{\delta^2} \leq \frac{\varepsilon}{2},
\]
since $\sum_{i=1}^{k^\dagger} \sigma_i^2 \leq \varepsilon \frac{\delta^2}{2}$. Similarly, $\sum_{i=1}^{k^\dagger - 1}\sigma_i^2 \leq \sum_{i=1}^{k^\dagger}\sigma_i^2 \leq \varepsilon \frac{\delta^2}{2}$, so both events
\[
    A = \left\{|S_{k^\dagger} - \sum_{i=1}^{k^\dagger}\mu_i| < \delta\right\}
    \quad\text{and}\quad
    B = \left\{|S_{k^\dagger-1} - \sum_{i=1}^{k^\dagger-1}\mu_i| < \delta\right\}
\]
hold each with probability at least $1 - \varepsilon/2$. On the event $A \cap B$, the margin assumption (eq. \ref{eq:margin_assumption}) gives:
\[
    S_{k^\dagger} > \tau + \delta - \delta = \tau
    \qquad\text{and}\qquad
    S_{k^\dagger-1} < \tau - \delta + \delta = \tau,
\]
which implies $k^*(\alpha) = k^\dagger$. Applying Lemma~\ref{lemma:events}:
\[
    p(k^*(\alpha) = k^\dagger) \geq p(A \cap B) \geq 1 - \varepsilon.  \qed
\]
\end{proof}
 
Proposition~\ref{prop:kdagger} guarantees that, under low variability, repeated runs yield a stable value of $k^*(\alpha)$, so the cardinality $|\mathcal{R}_x^{(j)}|$ should remain constant across runs.
 
\subsection{Stability of the Selected Feature Set}
\label{sec:stability_set}
 
We now show that low variability also implies stability of the identity of the selected features, i.e.\ the same features are ranked highest across runs with high probability. This reduces to ensuring that every relevant feature consistently outranks every non-relevant one.
 
\begin{lemma} \label{lemma:two_ordering}
Let $\alpha_i, \alpha_j$ be independent with $i<j$ and expected values $\mu_i > \mu_j$ and variances $\sigma_i^2, \sigma_j^2$. If
\[
    \sigma_i^2 + \sigma_j^2 \leq \varepsilon\,(\mu_i - \mu_j)^2,
\]
then $p(\alpha_i > \alpha_j) \geq 1 - \varepsilon$.
\end{lemma}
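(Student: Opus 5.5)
The plan is to reduce the comparison to a single random variable and apply Chebyshev's inequality, as in the proof of Proposition~\ref{prop:kdagger}. Define $D = \alpha_i - \alpha_j$. Its mean is $\mathbb{E}[D] = \mu_i - \mu_j =: \Delta > 0$. Because $\alpha_i$ and $\alpha_j$ are independent, the covariance term vanishes and
\[
\mathrm{Var}(D) = \sigma_i^2 + \sigma_j^2 .
\]
The event $\{\alpha_i \le \alpha_j\}$ is the event $\{D \le 0\}$. This event is contained in $\{D - \Delta \le -\Delta\}$, and hence also in $\{|D - \mathbb{E}[D]| \ge \Delta\}$.

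Chebyshev's inequality then gives
\[
p(\alpha_i \le \alpha_j) \le p\bigl(|D - \mathbb{E}[D]| \ge \Delta\bigr) \le \frac{\sigma_i^2+\sigma_j^2}{(\mu_i-\mu_j)^2} \le \varepsilon,
\]
where the last step uses the hypothesis of the lemma. Taking complements yields $p(\alpha_i > \alpha_j) \ge 1-\varepsilon$.

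There is no real obstacle here. The one point that needs care is the strictness in the event inclusion: we need $\{D \le 0\} \subseteq \{|D-\Delta| \ge \Delta\}$. This holds because $D \le 0$ implies $\Delta - D \ge \Delta > 0$.

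Two optional remarks. First, the one-sided Cantelli inequality would give the sharper bound $\frac{\varepsilon}{1+\varepsilon}$, but the two-sided Chebyshev bound already suffices. Second, independence is used only to drop the covariance term, so the same argument would work under nonpositive covariance.

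In the next step, this pairwise statement is the building block that will be combined with Lemma~\ref{lemma:events} over all relevant/non-relevant pairs, and with Proposition~\ref{prop:kdagger}, to obtain stability of the whole selected set.
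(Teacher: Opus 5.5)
Your proof is correct and is essentially the paper's argument: Chebyshev applied to $D=\alpha_i-\alpha_j$, whose variance is $\sigma_i^2+\sigma_j^2$ by independence, together with the inclusion $\{D\le 0\}\subseteq\{|D-\Delta|\ge\Delta\}$. One slip in your optional remark: since $\mathrm{Var}(D)=\sigma_i^2+\sigma_j^2-2\,\mathrm{Cov}(\alpha_i,\alpha_j)$, the argument extends to \emph{nonnegative} covariance, not nonpositive covariance.
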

 
\begin{proof}
Let $D = \alpha_i - \alpha_j$, with expected value $\Delta = \mu_i - \mu_j > 0$ and variance $\sigma_i^2 + \sigma_j^2$. By Chebyshev's inequality:
\[
    p(|D - \Delta| \geq \Delta) \leq \frac{\sigma_i^2 + \sigma_j^2}{\Delta^2} \leq \varepsilon.
\]
Observe that $|D - \Delta| < \Delta$ implies $D > 0$, which means that $p(\alpha_i > \alpha_j) = p(D>0)\geq p(|D-\Delta|<\Delta) \geq 1 - \varepsilon$. \qed
\end{proof}
 
\begin{corollary} \label{cor:order}
Let $\alpha_1,\dots,\alpha_d$ be pairwise independent with means $\mu_1 > \mu_2 > \cdots > \mu_d$. If for every $i \in \{1,\dots,k^\dagger\}$ and $j \in \{k^\dagger+1,\dots,d\}$:
\[
    \sigma_i^2 + \sigma_j^2 \leq \varepsilon_{i,j}\,(\mu_i - \mu_j)^2,
\]
then:
\[
    p\!\left(\mathcal{R}_x^{(j)} = \{1,\dots,k^\dagger\}\right) \geq 1 - \sum_{i=1}^{k^\dagger}\sum_{j=k^\dagger+1}^{d} \varepsilon_{i,j}.
\]
\end{corollary}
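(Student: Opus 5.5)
The plan is to reduce the event $\mathcal{R}_x^{(j)} = \{1,\dots,k^\dagger\}$ to a finite intersection of pairwise ordering events. Then Lemma~\ref{lemma:two_ordering} controls each pair and Lemma~\ref{lemma:events} takes the union bound. The cardinality of the selected set is assumed to be $k^\dagger$. For TK this holds by construction with $k = k^\dagger$. For AS it is the content of Proposition~\ref{prop:kdagger}, and I would state the corollary as conditional on $|\mathcal{R}_x^{(j)}| = k^\dagger$, i.e.\ on the selection retaining the top $k^\dagger$ features of the run.

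\textbf{Step 1: reduction to pairwise orderings.} For $i \in \{1,\dots,k^\dagger\}$ and $l \in \{k^\dagger+1,\dots,d\}$, set $E_{i,l} = \{\alpha_i > \alpha_l\}$. On $\bigcap_{i,l} E_{i,l}$, every feature in $\{1,\dots,k^\dagger\}$ strictly exceeds every feature outside it. Hence the sorting permutation $\pi$ of the run places exactly the indices $1,\dots,k^\dagger$ in its first $k^\dagger$ positions, up to reordering among themselves. The order inside the top block is irrelevant, because $\mathcal{R}_x^{(j)}$ is a set. Ties do not matter either, since the inequalities are strict across the two blocks. Thus
\[
\bigcap_{i=1}^{k^\dagger}\bigcap_{l=k^\dagger+1}^{d} E_{i,l} \subseteq \left\{\mathcal{R}_x^{(j)} = \{1,\dots,k^\dagger\}\right\}.
\]

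\textbf{Step 2: pairwise bounds.} Each pair $(\alpha_i,\alpha_l)$ with $i<l$ is independent by pairwise independence, has $\mu_i > \mu_l$, and satisfies $\sigma_i^2+\sigma_l^2 \le \varepsilon_{i,l}(\mu_i-\mu_l)^2$. Lemma~\ref{lemma:two_ordering} therefore gives $p(E_{i,l}^c) \le \varepsilon_{i,l}$. Only pairwise independence is needed here, because the lemma uses just the variance of a difference of two variables.

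\textbf{Step 3: union bound.} Applying Lemma~\ref{lemma:events} to the $k^\dagger(d-k^\dagger)$ events $E_{i,l}$ and using the inclusion from Step 1 gives
\[
p\left(\mathcal{R}_x^{(j)} = \{1,\dots,k^\dagger\}\right) \ge 1 - \sum_{i,l}\varepsilon_{i,l}.
\]

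The main obstacle is the interaction with AS in Step 1. If $k^*(\alpha^{(j)})$ differs from $k^\dagger$, the set cannot equal $\{1,\dots,k^\dagger\}$ however the ordering comes out. I would handle this by stating the bound for TK with $k=k^\dagger$, or equivalently on the event $\{k^*(\alpha)=k^\dagger\}$. I would then remark that combining with Proposition~\ref{prop:kdagger} through one more application of Lemma~\ref{lemma:events} costs an additive $\varepsilon$ in the unconditional AS bound.
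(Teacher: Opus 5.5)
Your proof is correct and is the paper's argument made explicit: Lemma~\ref{lemma:two_ordering} bounds each cross-block inversion $\{\alpha_i \le \alpha_l\}$ by $\varepsilon_{i,l}$, Lemma~\ref{lemma:events} takes the union bound over the $k^\dagger(d-k^\dagger)$ pairs, and your Step~1 inclusion spells out the reduction that the paper leaves implicit. Your AS caveat is also right and matches how the paper later uses the corollary, namely as the ordering event $B$ intersected with $A=\{k^*=k^\dagger\}$; phrase it as that intersection rather than as a conditional probability, because conditioning on $\{k^*=k^\dagger\}$ would require dividing by its probability.
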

 
The proof follows immediately from Lemma~\ref{lemma:events} and Lemma~\ref{lemma:two_ordering} applied to each pair $(i,j)$.
 
\subsection{Divergence of WoE under Low Variability}
To unify notation, in the TK scenario we refer to $k$ as $k^\dagger$. Let $A$ be the event that $k^{(j)} = k^\dagger$ (assured in the case of TK), and $B$ be the event that the correct $k^\dagger$ features are ranked highest. Combining the results of Sections~\ref{sec:stability_k}--\ref{sec:stability_set}:
\[
    p(A) \geq 1 - \varepsilon_k, \qquad p(B) \geq 1 - \varepsilon_o,
\]
where $\varepsilon_k$ (which is $0$ in the case of TK) and $\varepsilon_o$ can be made arbitrarily small by reducing the attribution variances through increased sampling. Since $A \cap B$ implies $h^{(j)} = h^*$, Lemma~\ref{lemma:events} gives:
\[
    p_0 := p(h^{(j)} = h^*) \geq 1 - \varepsilon_k - \varepsilon_o.
\]
 
Now let $X$ be the number of runs (out of $N$) for which $h^{(j)} = h^*$. By independence of runs, $X \sim \mathrm{Bin}(N, p)$ for some $p \in [0,1]$. Our goal is  to estimate the empirical posterior and, in particular, to characterize the probability of observing large values of the posterior probability. As we will show, by sufficiently controlling the variance, we can guarantee with high probability that the posterior exceeds an arbitrary threshold. It is immediate to observe that the empirical posterior in eq. \ref{eq:posterior} satisfies $\hat{p}(h^* \mid e) = X/N \geq \lambda$ if and only if $X \geq \lceil \lambda N \rceil$.

\begin{lemma} \label{lemma:tail_decreasing}
Let $X \sim \mathrm{Bin}(N, p)$ and $k \in \{1,\dots,N\}$. The function $T(p) := P(X \geq k)$ is strictly increasing in $p \in (0,1)$.
\end{lemma}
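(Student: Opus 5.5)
We need $T(p)=P(X\ge k)$ to be strictly increasing on $(0,1)$ for $k\in\{1,\dots,N\}$. The plan is to differentiate $T$ and show the derivative is strictly positive on the open interval.

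Write $T(p)=\sum_{m=k}^{N}\binom{N}{m}p^m(1-p)^{N-m}$, a polynomial in $p$, hence differentiable. Differentiating term by term, each summand contributes
\[
\binom{N}{m}\left[m p^{m-1}(1-p)^{N-m}-(N-m)p^m(1-p)^{N-m-1}\right].
\]
Using $\binom{N}{m}m=N\binom{N-1}{m-1}$ and $\binom{N}{m}(N-m)=N\binom{N-1}{m}$, the sum telescopes. The $m$-th negative term cancels the $(m+1)$-th positive term, and the term with $m=N$ has no negative part. This leaves
\[
T'(p)=N\binom{N-1}{k-1}p^{k-1}(1-p)^{N-k}.
\]
For $p\in(0,1)$ and $1\le k\le N$, every factor is strictly positive, so $T'(p)>0$. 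Strict monotonicity on $(0,1)$ then follows from the mean value theorem.

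The only delicate step is the bookkeeping in the telescoping, namely checking the boundary terms $m=k$ and $m=N$. As a cross-check, I would give a coupling argument that avoids the algebra. Take $U_1,\dots,U_N$ i.i.d.\ uniform on $[0,1]$ and set $X_p=\#\{i:U_i\le p\}$. For $p<q$ we have $X_p\le X_q$ pointwise. The event $\{X_p<k\le X_q\}$ has positive probability: for instance, when exactly $k-1$ of the $U_i$ lie in $[0,p]$ and one lies in $(p,q]$, which is possible because $k-1\le N-1$. Hence $T(q)-T(p)=P(X_p<k\le X_q)>0$.

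Either argument suffices. I would present the derivative computation, with the coupling as a remark.
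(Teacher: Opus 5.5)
Your proof is correct. With $a_m=N\binom{N-1}{m-1}p^{m-1}(1-p)^{N-m}$, the negative part of the $m$-th derivative term is exactly $a_{m+1}$ and $a_{N+1}=0$, so $T'(p)=a_k>0$ on $(0,1)$. The coupling remark is also sound.

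The paper also differentiates, but it signs the derivative differently. It factors each term as $\binom{N}{i}\frac{i-Np}{p(1-p)}p^i(1-p)^{N-i}$, which gives $T'(p)=\frac{1}{p(1-p)}\mathbb{E}[(X-Np)\mathbf{1}_{X\geq k}]$. It then splits into cases. If $k\leq Np$, it uses $\mathbb{E}[X-Np]=0$ to pass to $\{X<k\}$, where the integrand is negative. If $k>Np$, the integrand is already positive on $\{X\geq k\}$. Strictness there implicitly uses $P(X<k)>0$, resp.\ $P(X\geq k)>0$, for $p\in(0,1)$.

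Your telescoping avoids the case split and makes strict positivity visible factor by factor. It also gives the exact derivative, the $\mathrm{Beta}(k,N-k+1)$ density. This is the order-statistic identity $P(X\geq k)=P(U_{(k)}\leq p)$, with $U_{(k)}$ the $k$-th smallest $U_i$, which your coupling encodes.

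The paper's computation is really the covariance identity $T'(p)=\mathrm{Cov}(X,\mathbf{1}_{X\geq k})/(p(1-p))$. That identity applies unchanged to $\mathbb{E}_p[f(X)]$ for any nondecreasing, non-constant $f$, with no telescoping structure needed. Your coupling argument has the same generality and needs no calculus at all.
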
 
\begin{proof}
By definition, $T(p):= P(X \geq k) = \sum_{i=k}^N \binom{N}{i}p^i(1-p)^{N-i}$. Differentiating:
\begin{align*}
    T'(p) &= \sum_{i=k}^{N-1} \binom{N}{i}\left[ip^{i-1}(1-p)^{N-i} - (N-i)p^i(1-p)^{N-i-1}\right] +Np^{N-1} \\
    &= \frac{1}{p(1-p)}\sum_{i=k}^N \binom{N}{i}(i - Np)\,p^i(1-p)^{N-i}= \frac{1}{p(1-p)}\,\mathbb{E}\!\left[(X - \mathbb{E}[X])\,\mathbf{1}_{X \geq k}\right].
\end{align*}
Since $\mathbb{E}[(X - \mathbb{E}[X])] = 0$, we have:
\[
\mathbb{E}\!\left[(X-Np)\,\mathbf{1}_{X \geq k}\right] = -\,\mathbb{E}\!\left[(X-Np)\,\mathbf{1}_{X < k}\right].
\]
If $k \leq Np$, then for all $i < k$ we have $(i - Np) < 0$, hence $\mathbb{E}[(X-Np)\,\mathbf{1}_{X < k}] < 0$, and therefore $\mathbb{E}[(X-Np)\,\mathbf{1}_{X \geq k}] > 0$. If instead $k > Np$, then for all $i \geq k$ we have $(i - Np) > 0$, hence $\mathbb{E}[(X-Np)\,\mathbf{1}_{X \geq k}] > 0$. In both cases:
\[
\frac{\partial T}{\partial p} = \frac{1}{p(1-p)}\,\mathbb{E}\!\left[(X-Np)\,\mathbf{1}_{X \geq k}\right] > 0. 
\]
\qed
\end{proof}
Applying Lemma~\ref{lemma:tail_decreasing} with $p \geq 1 - \varepsilon_k - \varepsilon_o =: p_l$:
\[
    p(X \geq \lceil\lambda N\rceil) \geq \sum_{i=\lceil\lambda N\rceil}^{N}\binom{N}{i}p_l^i(1-p_l)^{N-i} =: p_\lambda.
\]
Now, we observe that $1\geq p_{\lambda}\geq p_l^N$ where the second inequality is obtained by setting $i = N$ and noting that the terms of the sum are all positive. Hence, as $p_l \to 1$, $p_\lambda \to 1$ for any fixed $\lambda \in (0,1)$.
 
\paragraph{Implication for WoE.}
Under sufficiently low attribution variability, $\hat{p}(h^* \mid e) \geq \lambda$ with probability $p_\lambda$. Now, considering the complementary hypothesis $\bar{h}^*$ as the alternative one, $\hat{p}(\bar{h}^* \mid e) \leq 1 - \lambda$. Since the prior odds term is bounded, it follows that, as $\lambda\to 1$:
\[
woe(h^*:e)\to \infty.
\]
This establishes the following.
 
\begin{theorem} \label{theo:infinity}
Let $\alpha$ be a stochastic mutually independent FIM with component variances $\sigma_i^2$. Fix an instance $x$ and let $h^*$ be the hypothesis generated from the mean attribution $\bar{\alpha}$ via AS or TK (with the only difference of $\epsilon_k=0)$ and let $h'$ be the complementary hypothesis. Then, as $\max_i \sigma_i^2 \to 0$,
\[
woe(h^* : e) \to +\infty
\]
with probability approaching $1$.
\end{theorem}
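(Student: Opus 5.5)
The plan is to chain the results of this section: first show that a single run reproduces $h^*$ with probability tending to one, then that the empirical posterior $\hat{p}(h^*\mid e)$ tends to one, and finally that the WoE diverges through identity~\eqref{eq:woe_identity}. Throughout, the means $\mu_1>\cdots>\mu_d$ are fixed and pairwise distinct, and for AS the margin $\delta>0$ of Proposition~\ref{prop:kdagger} is assumed. I will also identify the reference hypothesis computed from $\bar\alpha$ with the one computed from $\mu$. This is justified because once the variances are small, $\bar\alpha$ lies within $\delta/2$ of $\mu$ (and within half the minimal gap $\min_i(\mu_i-\mu_{i+1})$) with high probability by Chebyshev, so the ordering and $k^*(\bar\alpha)=k^\dagger$ coincide with those of $\mu$.

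\textbf{Step 1 (single run).} Set $\sigma_{\max}^2=\max_i\sigma_i^2$. For AS, take $\varepsilon_k := 2k^\dagger\sigma_{\max}^2/\delta^2$. Since $\sum_{i\le k^\dagger}\sigma_i^2\le k^\dagger\sigma_{\max}^2 = \varepsilon_k\delta^2/2$, Proposition~\ref{prop:kdagger} gives $p(k^*(\alpha)=k^\dagger)\ge 1-\varepsilon_k$. For TK, $\varepsilon_k=0$. Next, set $\varepsilon_{i,j}:=2\sigma_{\max}^2/(\mu_i-\mu_j)^2$, which satisfies the hypothesis of Corollary~\ref{cor:order}, and let $\varepsilon_o=\sum_{i\le k^\dagger<j}\varepsilon_{i,j}$. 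Both $\varepsilon_k$ and $\varepsilon_o$ are $O(\sigma_{\max}^2)$ and hence tend to $0$. By Lemma~\ref{lemma:events}, $p_0\ge p_l:=1-\varepsilon_k-\varepsilon_o\to1$.

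\textbf{Step 2 (empirical posterior).} The runs are independent, so $X\sim\mathrm{Bin}(N,p_0)$. By Lemma~\ref{lemma:tail_decreasing}, $P(X\ge\lceil\lambda N\rceil)\ge p_\lambda\ge p_l^N$ for any $\lambda\in(0,1)$. For fixed $N$ this lower bound tends to $1$. A sharper route is to bound $P(X=N)\ge p_l^N\to1$ directly. On that event $\hat p(h^*\mid e)=1$ and $\hat p(\bar h^*\mid e)=0$, which already makes the posterior log-odds $+\infty$.

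\textbf{Step 3 (WoE).} By~\eqref{eq:woe_identity}, $woe(h^*:e)$ equals the posterior log-odds $\log\frac{\hat p(h^*\mid e)}{\hat p(\bar h^*\mid e)}$ plus the prior term $\log\frac{1-2^{-k^\dagger}}{2^{-k^\dagger}}$, which is finite and independent of the variances. On the event $\{X\ge\lceil\lambda N\rceil\}$ the posterior log-odds are at least $\log\frac{\lambda}{1-\lambda}$. Fix any $M>0$ and choose $\lambda$ with $\log\frac{\lambda}{1-\lambda}$ plus the prior term exceeding $M$. Then $P(woe\ge M)\ge p_\lambda\to1$ as $\sigma_{\max}^2\to0$. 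This is exactly the statement that the WoE diverges to $+\infty$ with probability approaching $1$.

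\textbf{Main obstacle.} The delicate point is the order of limits with the finite sample size $N$. If $\lambda>1-1/N$, then $\hat p\ge\lambda$ forces $\hat p=1$, so the divergence is really the degenerate event $X=N$. I would therefore state the conclusion in the ``for every $M$, eventually with probability $\ge1-\eta$'' form above, with $N$ fixed. A second subtlety is tying $h^*$ (built from $\bar\alpha$) to $\mu$, and making sure the margin $\delta$ and the minimal gap stay fixed while only the variances shrink. I would handle both by the Chebyshev argument on $\bar\alpha$ described at the start, and then add one more application of Lemma~\ref{lemma:events}.
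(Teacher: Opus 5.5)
Your proposal is correct and follows the paper's own route. You union-bound Proposition~\ref{prop:kdagger} and Corollary~\ref{cor:order} to get $p_0\ge 1-\varepsilon_k-\varepsilon_o$, pass to the binomial tail via Lemma~\ref{lemma:tail_decreasing} with $p_\lambda\ge p_l^N$, and conclude through identity~\eqref{eq:woe_identity} with a bounded prior term. Your extra care makes explicit points the paper glosses over without changing the argument: the explicit $O(\sigma_{\max}^2)$ choices of $\varepsilon_k$ and $\varepsilon_{i,j}$, the Chebyshev bridge from $\bar\alpha$ to $\mu$, and the observation that for fixed $N$ the divergence is really the event $X=N$, with $P(X=N)\ge p_l^N$.
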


\section{Experimental Setup}
\label{sec:experiments}
This section is structured according to the origin of the reference hypothesis.  \\ (1) We test LIME and SHAP against a hypothesis $h^*$ derived from domain knowledge on the Titanic dataset \cite{cukierski2012titanic}, quantifying its alignment with FIMs. While a FIM produces a local explanation, the domain hypothesis is global. This reveals that WoE is affected by the global–local discrepancy (Section~\ref{sec:domain_knowledge}). (2) We consider a ground truth (GT) setting by generating a synthetic dataset and training a RF, thus having access to both data- and model-level relevance. This allows us to assess alignment (w.r.t. the data) and faithfulness (w.r.t. the model) via WoE (Section~\ref{sec:GT_knowledge}). 3) We test a third scenario where no external references are available. In this case, $h^*$ is defined as the empirical mean attribution across runs, so WoE captures internal stability (Section \ref{sec:hyp_testing}). The section concludes with an empirical validation of Theorem \ref{theo:infinity} and an analysis of the impact of the number of explainer runs (Section \ref{sec:empirical_validation}). 

Notably, to generate $h_j$ (Section \ref{sec:top_k_selection}), we adopt top-k selection  in the first two settings, as the exact set of relevant features is known; while in the third, we rely on the adaptive strategy, since no prior information on feature relevance is available.
Due to space constraints, details on datasets, preprocessing, and model hyperparameters will be left for the final version.

\subsection{Hypothesis Testing With Domain Knowledge}
\label{sec:domain_knowledge}
In this section, we compute the WoE for 50 instances LIME and SHAP on the Titanic dataset using a RF model.
Historical analyses of this setting identify \textit{sex} (due to the “women first” policy) and \textit{pclass} (a proxy for socio-economic status) as the most influential factors for survival, as also confirmed by prior empirical studies \cite{daml24}. Taking this as our reference hypothesis, our results show that LIME produces $27$ instances with $woe = +\infty$ (i.e., $p(h \mid e)=1$, where the reference hypothesis is always generated) and $23$ cases with finite $woe > 0$. In contrast, SHAP yields deterministic explanations, with $23$ instances having $woe = +\infty$ and $27$ having $woe = -\infty$ (i.e., $p(h \mid e)=0$), indicating the selection of features beyond $sex$ and $pclass$.

\begin{figure}[!h]
    \centering
    \begin{subfigure}{0.48\columnwidth}
        \centering
        \includegraphics[width=\linewidth]{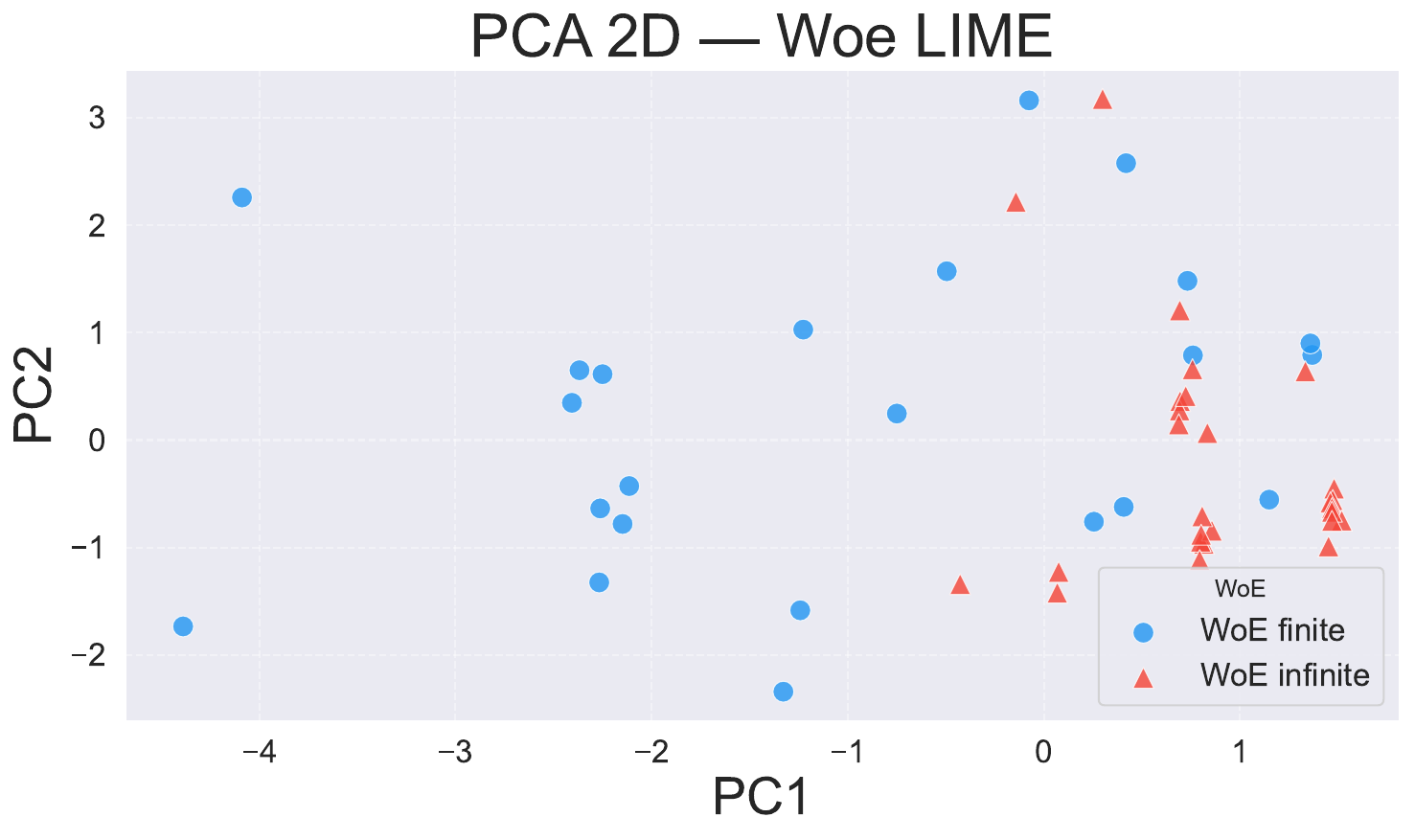}
    \end{subfigure}
    \hfill
    \begin{subfigure}{0.48\columnwidth}
        \centering
        \includegraphics[width=\linewidth]{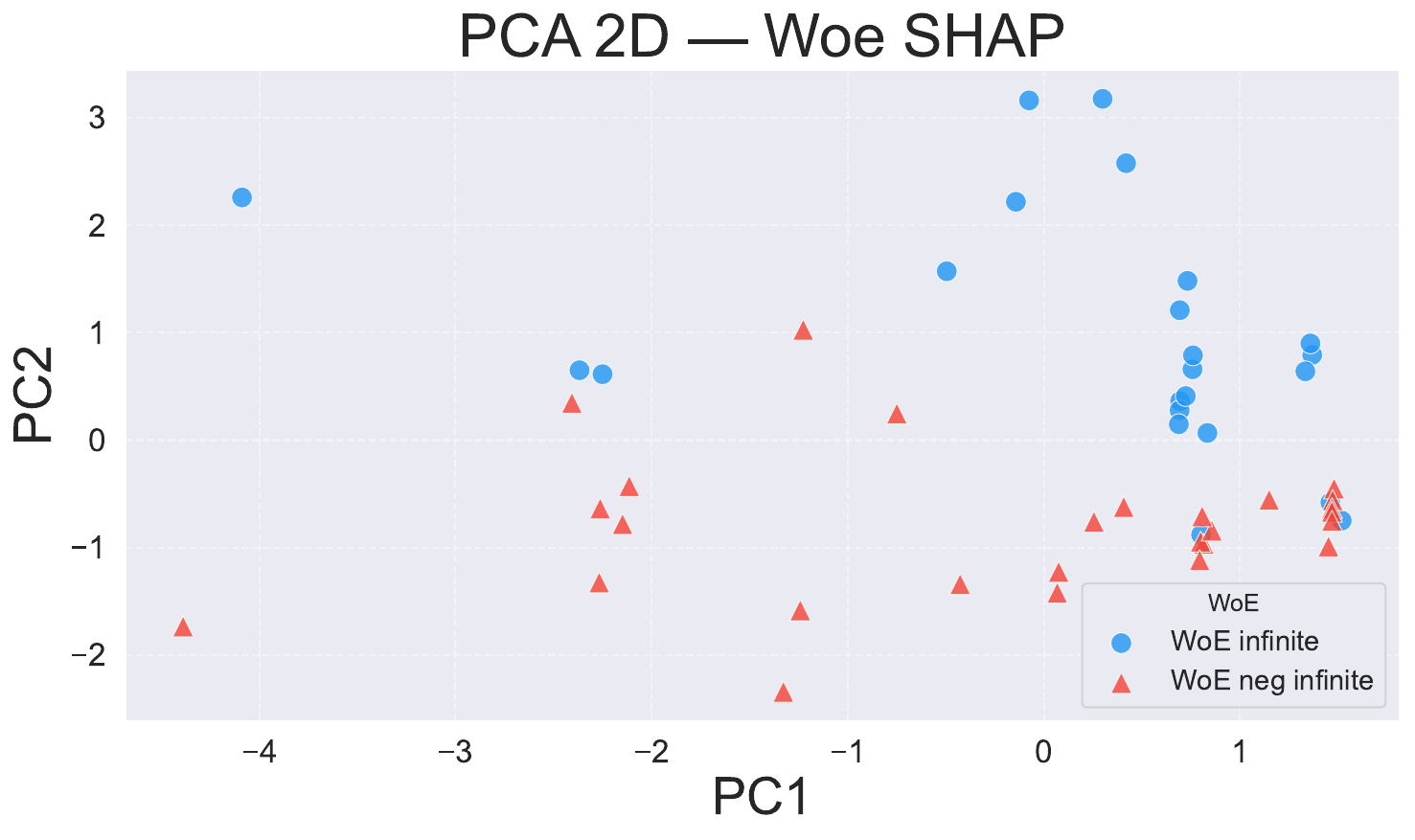}
    \end{subfigure}
    \caption{PCA Analysis for LIME and SHAP grouped according to WoE values.}
    \label{fig:pcas}
\end{figure}

A PCA projection with $54\%$ of variance explained (\autoref{fig:pcas}) further highlights geometric differences: for LIME, instances with finite and divergent WoE occupy distinct regions, while for SHAP the separation between $+\infty$ and $-\infty$ cases is even more pronounced. This suggests that in specific regions of the feature space, alternative variables (e.g., age) influence the explanations, contrary to the assumption of a single global explanation.

\textbf{Insight}: through WoE, we can test alignment between a local explanation and an assumed global explanation, thereby identifying regions where the local behavior differs.

\subsection{Hypothesis Testing with Synthetic Ground Truth Knowledge}
\label{sec:GT_knowledge}

In this section, we employ the WoE framework to measure alignment with Ground Truth (GT) knowledge. To this end, we construct a synthetic classification dataset, using the \texttt{make\_classification} function from \texttt{scikit-learn}---which generates a random n-class classification problem---, with $n_{\mathrm{inf}}=3$ informative features, whose indices $\{0,1,2\}$ constitute the known ground truth $h^*_{GT}$ by construction, and $n_{\mathrm{red}}=7$ redundant features generated as:
\[
     X_{\mathrm{red}}
     =
     (1-\lambda)\,X_{\mathrm{inf}}\,W + \lambda\,\mathcal{E},
     \quad
     W \in \mathbb{R}^{3 \times 7},\; \mathcal{E}\in \mathbb{R}^{n\times 7},\,
     \mathcal{E} \sim \mathcal{N}(0, I_7),
\]
where $\lambda \in [0,1]$ interpolates between a perfect linear combination of the informative features ($\lambda=0$) and pure noise ($\lambda=1$) and $n$ represents the amount of samples. We evaluate LIME and SHAP on a RF with $N=50$ runs on $50$ instances using top-$k$ selection ($k=3$), and report: $WoE_{GT}$, measuring alignment with $h^*_{GT}$; and $WoE_{Model}$, measuring alignment with the model's feature ranking $h^*_{Model}$ (i.e., faithfulness). The results for LIME are summarized in \autoref{fig:synthetic_gt}.

\begin{figure}[!h]
    \centering
    \includegraphics[width=1\linewidth]{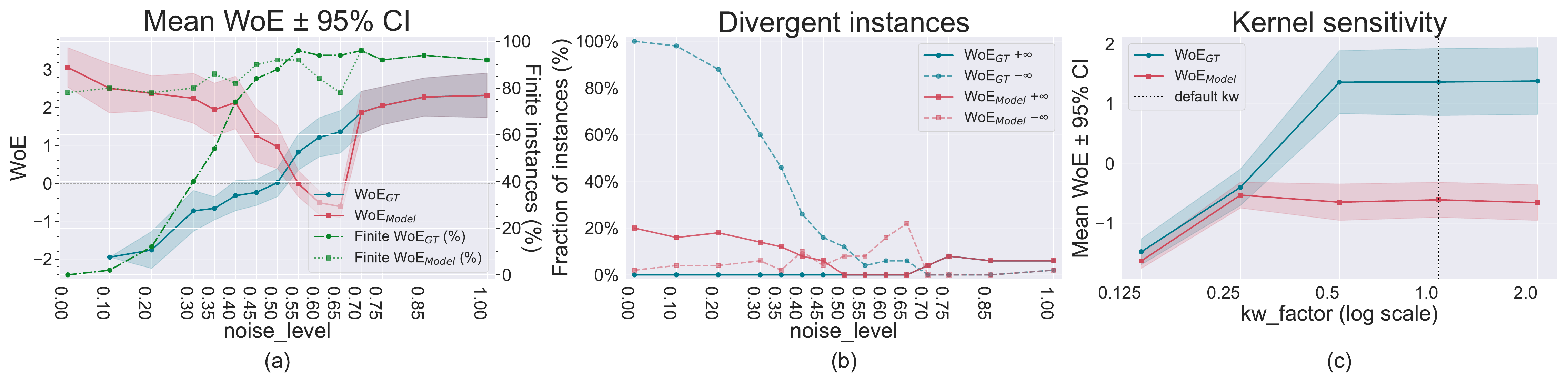}
    \caption{LIME experiment with known GT. (a) Mean $\mathrm{WoE}_{GT}$ and $\mathrm{WoE}_{Model}$ $\pm$ 95\% CI, with fraction of finite WoE values on the right vertical axis. (b) Fraction of divergent instances as a function of $\lambda$. (c) Sensitivity of $\mathrm{WoE}_{GT}$ and 
$\mathrm{WoE}_{Model}$ to LIME's kernel-width factor ($kw\_factor$) at 
$\lambda=0.65$, with the vertical line marking the default value.}
     \label{fig:synthetic_gt}
\end{figure}

This allows us to measure the alignment FIM-Model and FIM-GT. Two opposite trends emerge: at low noise levels, LIME aligns much more with the model than with the GT, showing that the model relies more on redundant features than on the truly relevant ones. As noise increases ($\lambda \geq 0.70$), $WoE_{GT}$ and $WoE_{Model}$ converge, meaning that redundant features become too noisy and the model is forced to rely on the truly relevant features. 

Counterintuitively, when $\lambda \in (0.50, 0.70)$, the FIM appears more aligned with the GT than with the model. To better understand this phenomenon, we analyzed the case $\lambda = 0.65$. As shown in \autoref{fig:synthetic_gt}(c), reducing the kernel width by a multiplicative factor ($kw\_factor$) causes $WoE_{Model}$ to progressively align with $WoE_{GT}$. A possible explanation for this behavior is the high curvature of the local neighborhood around the sample to be explained, which LIME ignores when its sampling radius is too high, thus leading to unfaithful explanations. However, our WoE framework manages to diagnose this.

SHAP plots are omitted because they are not very informative: since its values are almost deterministic, they lead to either complete alignment/disalignment w.r.t. the model and the GT, resulting in few finite instances to plot. However, the same trend is observed: as noise increases, the model converges to the GT behavior and the WoE values capture this pattern.

\textbf{Insight:} WoE can localize the disalignment along the Data$\,\to\,$Model$\,\to\,$FIM pipeline, thereby assessing the overall reliability of the FIM.

\subsection{Hypothesis Testing Without Prior Knowledge} \label{sec:hyp_testing}
The experiments in this section focus on comparing WoE across four datasets: Diabetes~\cite{diabetes_dataset}, Heart Disease~\cite{heart_disease}, Mobile~\cite{Mobile_price}, and Churn~\cite{churn_dataset}. Here we opted for the AS strategy with $\tau=0.8$. \autoref{fig:woe_distributions} illustrates the distributions obtained for LIME and SHAP. For the latter, Diabetes and Churn are not presented because SHAP always produced the same hypotheses, yielding $woe = +\infty$ (\autoref{tab:woe_divergence}). Overall, there are no significant differences between the two distributions, demonstrating that the two methods are comparable. However, when analyzing \autoref{tab:woe_divergence} of divergent WoE, we observe that SHAP tends to produce the same explanation more consistently, a finding consistent with prior works \cite{stability_perturbation,nayebi2023empirical}. Interestingly, WoE generally presents a positive score, demonstrating that LIME and SHAP produce internally consistent explanations. However, this score decreases for more complex datasets, as is the case with Mobile, and where LIME even shows 0 positive divergent values and 5 negative divergent values. This is likely due to the greater interactions between the variables, which make it more difficult to obtain a stable ranking and thus generate the same hypothesis.

\textbf{Insight:} when evaluating WoE on mean feature importance, we can capture the internal consistency of the FIM---i.e. its variability.

\begin{figure}[!h]
    \centering
    \begin{subfigure}{0.49\columnwidth}
        \centering
        \includegraphics[width=\linewidth]{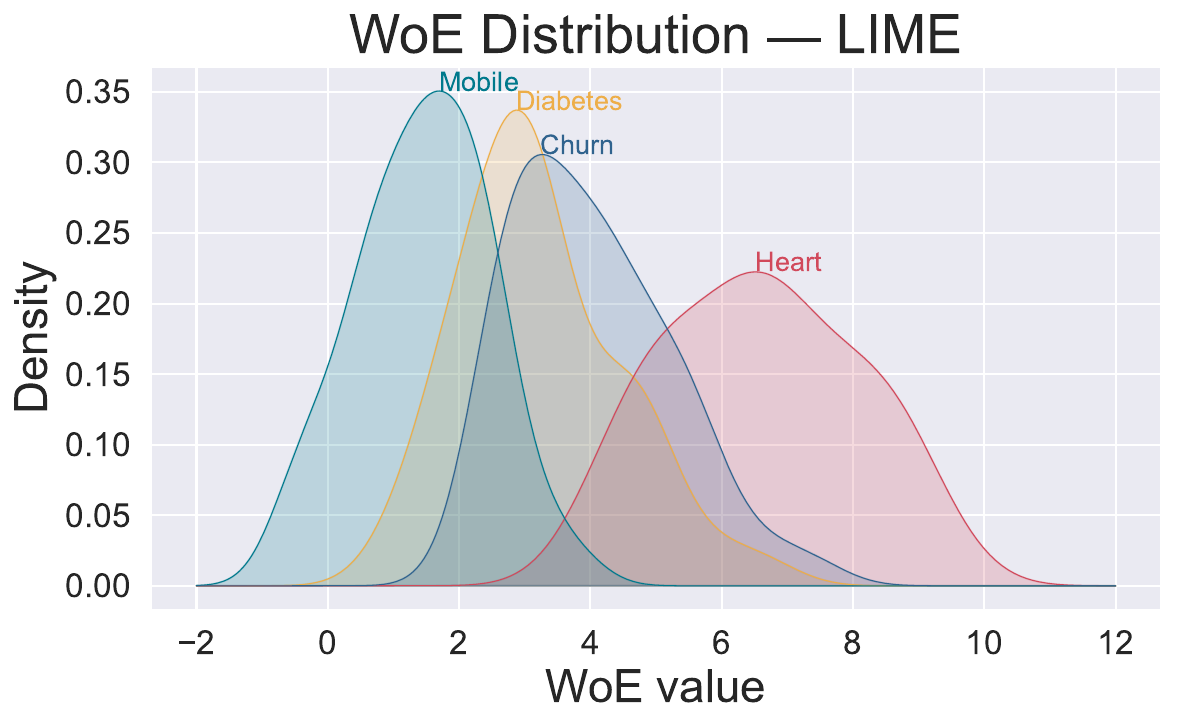}
    \end{subfigure}
    \hfill
    \begin{subfigure}{0.49\columnwidth}
        \centering
        \includegraphics[width=\linewidth]{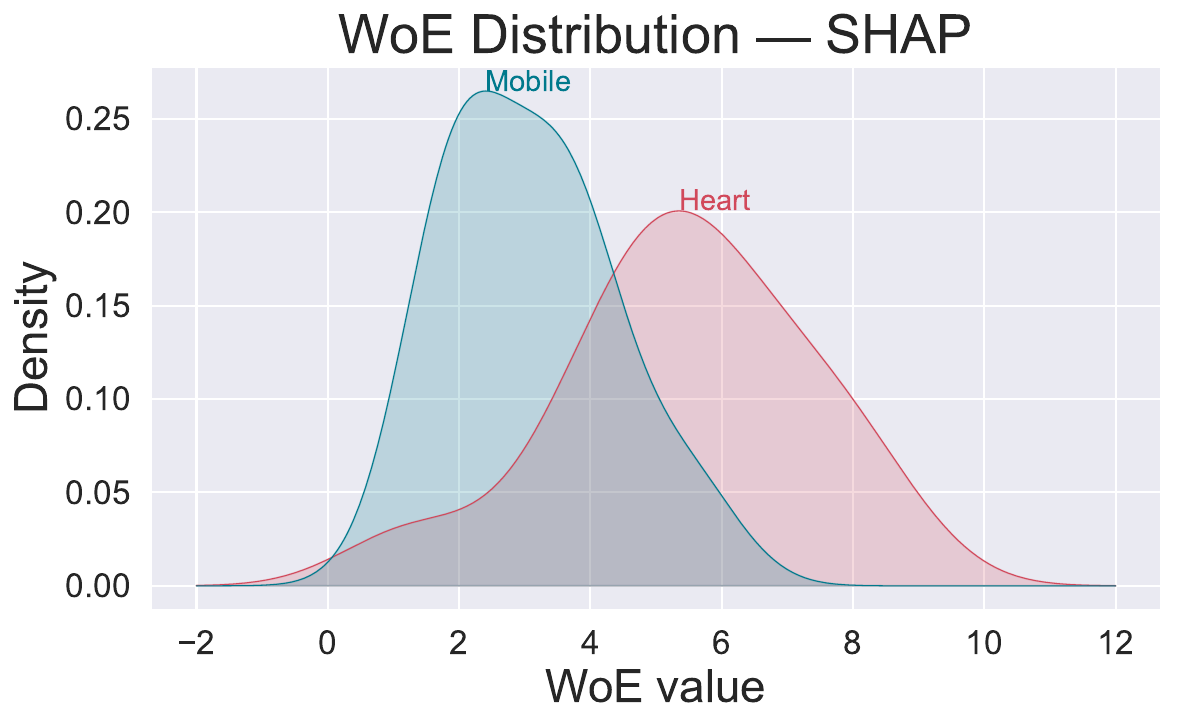}
    \end{subfigure}
    \caption{Gaussian KDE of the LIME and SHAP WoE values across datasets.}
    \label{fig:woe_distributions}
\end{figure}

\begin{table}[t]
\caption{Number of divergent WoE cases out of $50$ instances for each method and dataset. Each entry is reported as (+$\infty$ / -$\infty$).\\}
\centering
\begin{tabular}{lcccc}
\hline
Method & Diabetes & Heart & Mobile & Churn \\
\hline
LIME  & 5 / 0  & 11 / 0 & 0 / 5  & 5 / 0 \\
SHAP  & $\boldsymbol{50 / 0}$ & $\boldsymbol{25 / 0}$ & $\boldsymbol{41 / 0}$ & $\boldsymbol{50 / 0}$ \\
\hline
\end{tabular}
\label{tab:woe_divergence}
\end{table}

\subsection{Empirical Validation of Theorem \ref{theo:infinity}}
\label{sec:empirical_validation}
We empirically validate that, when using the mean attribution as the reference hypothesis, lower variance in AMs leads to higher WoE. We consider two settings: (i) real data, using LIME on the Diabetes dataset while varying the number of background samples for 4 models: Random Forest (RF), Multilayer perceptron (NN), Decision Tree (DT) and Support Vector Machine (SVM);  and (ii) a synthetic FIM with controlled variance. In the first setting (\autoref{fig:LIME_var}), increasing $num\_samples$, which ranges in \{100,\allowbreak 200,\allowbreak 300,\allowbreak 500,\allowbreak 1000,\allowbreak 2000,\allowbreak 3000,\allowbreak 5000\}, reduces variance and results in higher WoE values. This is also reflected in the proportion of divergent cases: positively divergent WoE ($+\infty$) slightly increases, while negatively divergent cases ($-\infty$) decrease significantly. In the synthetic setting (\autoref{fig:syntehtic_var_boxplot}), we isolate the effect of variability by directly generating synthetic attribution vectors, without relying on any dataset or model. We define a reference importance vector with $10$ features evenly spaced in $[0,1]$, and model the FIM as a stochastic generator centered around these values. For each configuration, we sample $50$ attribution vectors for each of $50$ simulated instances. Variability is controlled by adding Gaussian noise with standard deviation $\sigma = (1/l)\cdot 0.1$, where $l \in {0.5, 1, 1.5, 2}$.
The results consistently show that lower variance leads to higher WoE, confirming the theoretical result.

\begin{figure}[!h]
\centering
\begin{minipage}{0.48\columnwidth}
    \includegraphics[width=\linewidth]{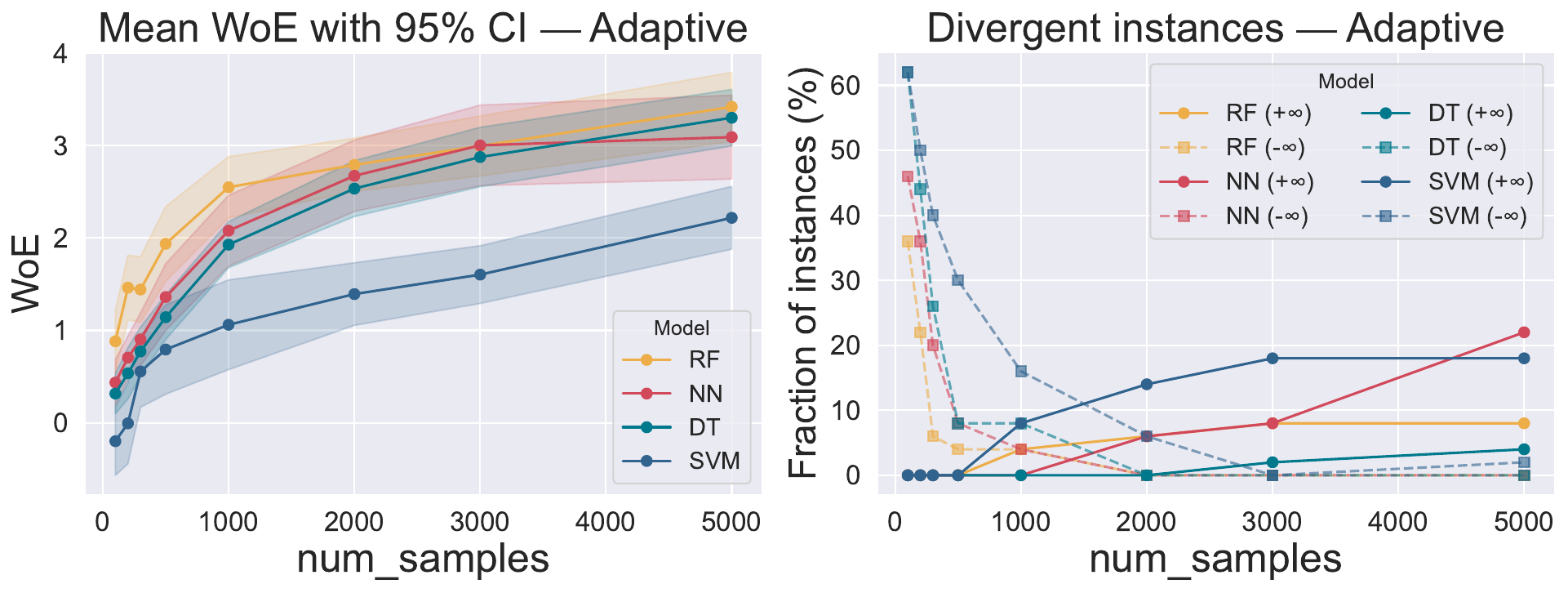}
    \caption{Impact of the variability on the WoE for LIME across 4 models on the Diabetes dataset.}
    \label{fig:LIME_var}
\end{minipage}\hfill
\begin{minipage}{0.48\columnwidth}
    \includegraphics[width=\linewidth]{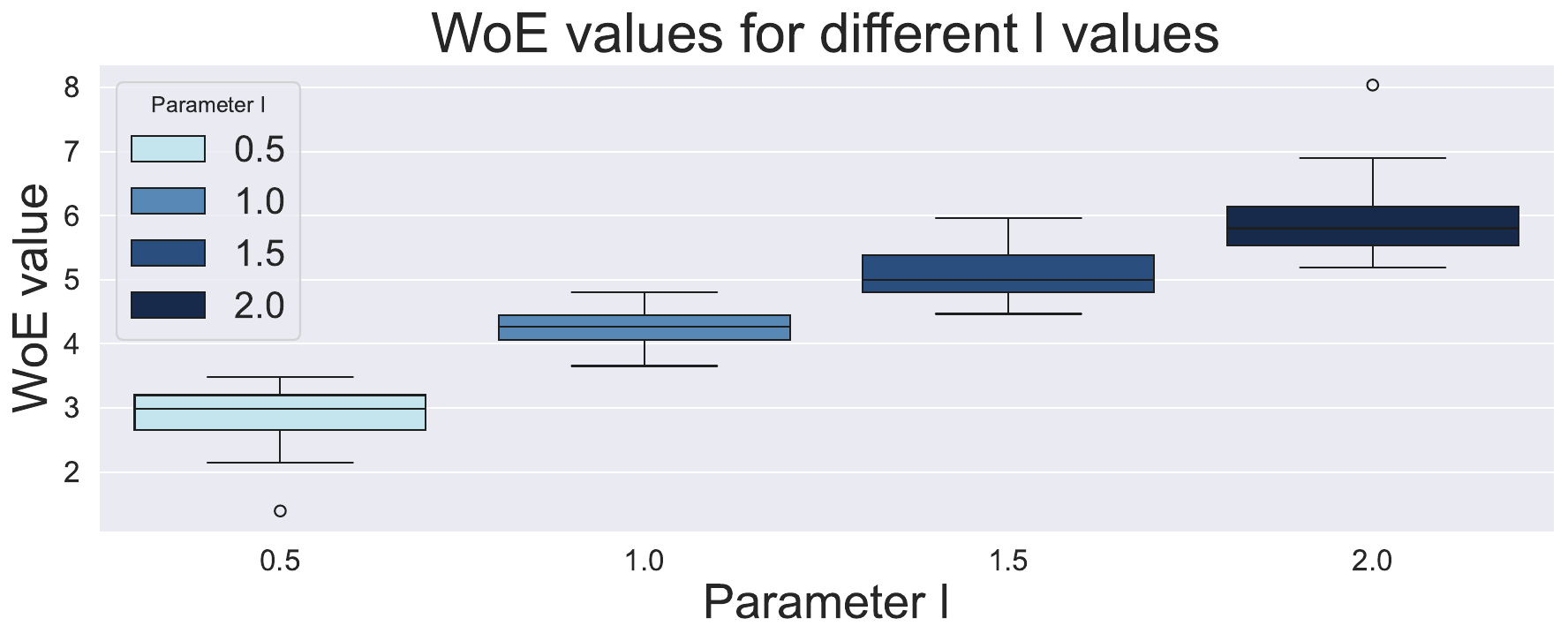}
    \caption{Analysis of the impact of variability in the WoE for a controlled synthetic setting.}
    \label{fig:syntehtic_var_boxplot}
\end{minipage}
\end{figure}

\subsubsection{Impact of the Number of Runs}
Our estimate of the posterior distribution is based on the proportion of runs that support the reference hypothesis. In this section, we briefly analyze the impact of the number of runs $N$ on WoE estimates. We report the RF results for Diabetes using LIME. We run LIME $N$ times on $50$ instances of the test set with $N \in \{5, 10, 20, 30, 40, 50, 60, 70, 80, 90, 100\}$. We then analyzed the distributions in \autoref{fig:woe_runs} of the observed WoE values, noting that there is high consistency across the runs. Furthermore, we measured the dispersion of the WoE vectors (see \autoref{fig:woe_cv}) for each instance using the coefficient of variation (CV), which is defined as $\sigma/|\mu|$. The distribution of CV values is concentrated around values close to $0$ (with 80\% of values between $0$ and $0.3$), indicating consistency in the weight of evidence values across the runs. 

\begin{figure}[!h]
\centering
\begin{minipage}{0.48\columnwidth}
    \includegraphics[width=\linewidth]{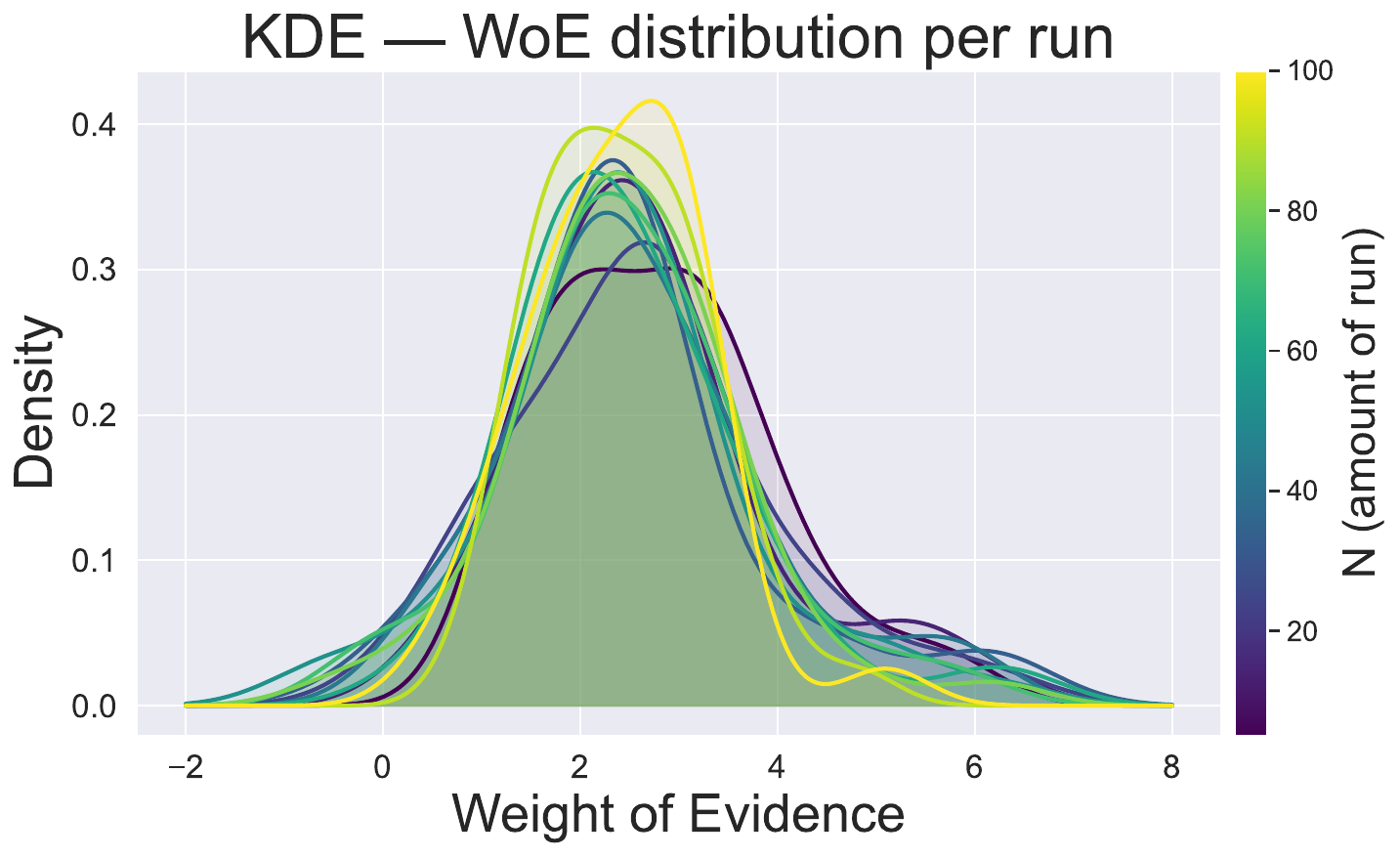}
    \caption{Gaussian KDE of WoE distributions across different numbers of runs $N$. The strong overlap between the distributions highlights the stability of WoE values.}
    \label{fig:woe_runs}
\end{minipage}\hfill
\begin{minipage}{0.48\columnwidth}
    \includegraphics[width=\linewidth]{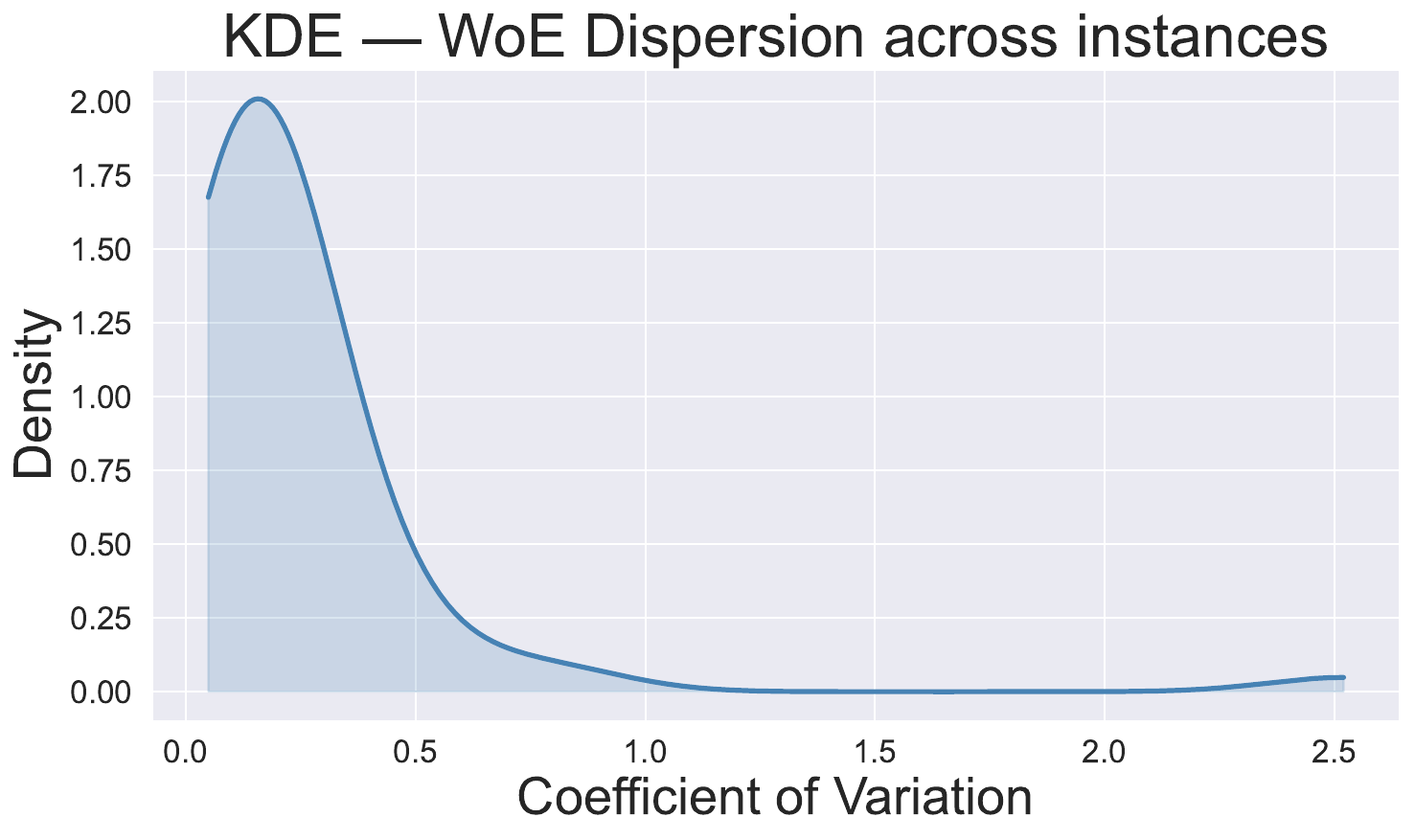}
    \caption{Gaussian KDE of the CV of WoE values across different numbers of runs for 50 instances. The plot is concentrated around values close to $0$, indicating consistency of WoE estimates.}
    \label{fig:woe_cv}
\end{minipage}
\end{figure}

\section{Limitations}\label{sec:limitations}
The flexibility of the proposed framework comes at the cost of several design choices. In this work, we adopted intuitive choices without claiming their optimality. For instance, as no prior knowledge was available, we modeled the prior probability as a uniform distribution over hypotheses. However, alternative priors could be naturally incorporated \cite{lunn2013bugs}. Similarly, the posterior distribution is estimated via the empirical frequency of a reference hypothesis across multiple runs, requiring exact equality between feature sets. This choice might be too restrictive. Softer alternatives could instead consider partial matches (e.g., $h^{(j)} \subseteq h^*$). More generally, a continuous formulation based on a distance measure $||h^{(j)} - h^*||$ could provide a smoother notion of similarity.
Another degree of freedom lies in the hypothesis generation process from feature importance scores. In this work, we opted for the TK and AS strategies but alternative options could be explored. Finally, the proposed approach requires running the explanatory method multiple times (even though we have shown that stable estimates of WoE can be obtained with just a few runs), making the overall procedure dependent on the computational cost of the FIM.

\section{Conclusion and Future Work}
\label{sec:conclusion}
In this work, we proposed a framework that embeds FIMs within a hypothesis-testing perspective based on WoE. 
The key design choice is the source of the reference
hypothesis $h^*$, which determines what the WoE is actually measuring:
alignment with domain knowledge, faithfulness to the intrinsic mechanism or internal stability of the FIM across repeated runs. In short, the framework not only allows one to evaluate a hypothesis against an alternative, but also to quantify key properties within the evaluation landscape of FIMs. 
On the theoretical side, we established a formal connection between the stability of a FIM and the WoE.
The experimental setup led to several insights: WoE detects the local-global discrepancy; it identifies disalignment along the Data-Model-FIM pipeline; and it measures the internal consistency (variability, stability) of FIMs.

Future work will focus on relaxing some of the current design
choices, in particular by developing softer formulations of the posterior that allow partial hypothesis matches, and by exploring alternative prior distributions beyond the uniform assumption. We also plan to extend the evaluation to a broader range of FIMs, model types, and data modalities. Finally, we believe that the experiment described in Section \ref{sec:GT_knowledge} represents a promising avenue to explore, particularly with regards to the concept of faithfulness, which has always been studied exclusively in relation to the model while ignoring the potential disalignment between the model itself and the underlying data generating process.

\section*{Acknowledgements}
Eddie Conti has been partially supported by the predoctoral grant FI-STEP (2025 STEP 00108) from the Research and University Department of the Generalitat de Catalunya and cofunded by the European Social Fund Plus. \'Alvaro Parafita acknowledges his AI4Science fellowship within the “Generacion D” initiative by Red.es, Ministerio para la Transformación Digital y de la Función Pública, for talent attraction (C005/24-ED CV1), funded by NextGenerationEU through PRTR. Axel Brando received funding from the Horizon Europe Programme under the AI4DEBUNK Project (https://www.ai4debunk.eu), grant agreement num. 101135757.
Work partially supported by the Italian Ministry of Education and Research (MUR) in the framework of the FoReLab project (Departments of Excellence).

%
%
\bibliographystyle{splncs04}
\bibliography{mybib}

@article{lundberg2017unified,
  title={A unified approach to interpreting model predictions},
  author={Lundberg, Scott M and Lee, Su-In},
  journal={Advances in neural information processing systems},
  volume={30},
  year={2017}
}

@inproceedings{LIME,
author = {Ribeiro, Marco Tulio and Singh, Sameer and Guestrin, Carlos},
title = {"Why Should I Trust You?": Explaining the Predictions of Any Classifier},
year = {2016},
isbn = {9781450342322},
publisher = {Association for Computing Machinery},
booktitle = {Proceedings of the 22nd ACM SIGKDD International Conference on Knowledge Discovery and Data Mining},
}

@inproceedings{melis2021human,
  title={From human explanation to model interpretability: A framework based on weight of evidence},
  author={Melis, David Alvarez and Kaur, Harmanpreet and Daum{\'e} III, Hal and Wallach, Hanna and Vaughan, Jennifer Wortman},
  booktitle={Proceedings of the AAAI Conference on Human Computation and Crowdsourcing},
  volume={9},
  year={2021}
}

@inproceedings{alshehri2023explainable,
  title={Explainable goal recognition: a framework based on weight of evidence},
  author={Alshehri, Abeer and Miller, Tim and Vered, Mor},
  booktitle={Proceedings of the International Conference on Automated Planning and Scheduling},
  volume={33},
  year={2023}
}

@misc{le2025evidencedecisionexploringevaluative,
      title={From Evidence to Decision: Exploring Evaluative AI}, 
      author={Thao Le and Tim Miller and Liz Sonenberg and Ronal Singh and H. Peter Soyer},
      year={2025},
      eprint={2402.01292},
      archivePrefix={arXiv},
}

@article{wod1985weight,
  title={Weight of evidence: A brief survey},
  author={Wod, IJ},
  journal={Bayesian statistics},
  volume={2},
  year={1985}
}

@inproceedings{alshehri2021human,
  title={Human centered explanation for goal recognition system.},
  author={Alshehri, Abeer and Miller, Tim and Vered, Mor and Alamri, Hajar},
  booktitle={IJCAI-PRICAI Workshop On Explainable Artificial Intelligence (XAI) 2020},
  year={2021},
  organization={Association for the Advancement of Artificial Intelligence (AAAI)}
}

@inproceedings{stability_perturbation,
  title={Towards reliable explainable AI: a novel stability metric for trustworthy interpretations},
  author={Butt, Talal Ashraf and Iqbal, Muhammad},
  booktitle={International Conference on Multimedia Systems and Signal Processing},
  pages={198--210},
  year={2025},
  organization={Springer}
}

@article{miller2019explanation,
  title={Explanation in artificial intelligence: Insights from the social sciences},
  author={Miller, Tim},
  journal={Artificial intelligence},
  volume={267},
  year={2019},
  publisher={Elsevier}
}

@article{lipton2018mythos,
  title={The mythos of model interpretability: In machine learning, the concept of interpretability is both important and slippery.},
  author={Lipton, Zachary C},
  journal={Queue},
  volume={16},
  number={3},
  year={2018},
  publisher={ACM New York, NY, USA}
}

@inproceedings{jacovi2021contrastive,
  title={Contrastive Explanations for Model Interpretability},
  author={Jacovi, Alon and Swayamdipta, Swabha and Ravfogel, Shauli and Elazar, Yanai and Choi, Yejin and Goldberg, Yoav},
  booktitle={Proceedings of the 2021 Conference on Empirical Methods in Natural Language Processing},
  year={2021}
}

@inproceedings{kadir2023evaluation,
  title={Evaluation metrics for xai: A review, taxonomy, and practical applications},
  author={Kadir, Md Abdul and Mosavi, Amir and Sonntag, Daniel},
  booktitle={2023 IEEE 27th International Conference on Intelligent Engineering Systems (INES)},
  year={2023},
  organization={IEEE}
}

@article{rudin2023globally,
  title={Globally-consistent rule-based summary-explanations for machine learning models: application to credit-risk evaluation},
  author={Rudin, Cynthia and Shaposhnik, Yaron},
  journal={Journal of Machine Learning Research},
  volume={24},
  number={16},
  year={2023}
}

@article{geng2022computing,
  title={Computing rule-based explanations by leveraging counterfactuals},
  author={Geng, Zixuan and Schleich, Maximilian and Suciu, Dan},
  journal={arXiv preprint arXiv:2210.17071},
  year={2022}
}

@article{lunn2013bugs,
  title={The BUGS book},
  author={Lunn, David and Jackson, Christopher and Best, Nicky and Thomas, Andrew and Spiegelhalter, David},
  journal={A practical introduction to Bayesian analysis, Chapman Hall, London},
  year={2013}
}

@misc{heart_disease,
  author       = {Janosi, Andras and Steinbrunn, William and Pfisterer, Matthias and Detrano, Robert},
  title        = {{Heart Disease}},
  year         = {1989},
  howpublished = {UCI Machine Learning Repository},
}

@misc{diabetes_dataset,
  author       = {Vincent Sigillito},
  title        = {Diabetes Dataset},
  year         = {1990},
  note         = {Research Center, RMI Group Leader, Applied Physics Laboratory, The Johns Hopkins University, Laurel, MD, USA},
}

@inproceedings{Mobile_price,
  title={Classification of mobile phone price dataset using machine learning algorithms},
  author={Hu, Ningyuan},
  booktitle={2022 3rd International Conference on Pattern Recognition and Machine Learning (PRML)},
  year={2022},
  organization={IEEE}
}

@inproceedings{churn_dataset,
  title={Predictive Analytics for Customer Retention: A CatBoost Model for Churn Detection},
  author={Sharma, Neha and Awasthi, Anmol and Parmar, Devendra Singh and Chouhan, Tejeshwari and Singh, Alka and Rawat, Daksh},
  booktitle={2025 International Conference on Networks and Cryptology (NETCRYPT)},
  year={2025},
  organization={IEEE}
}

@inproceedings{nayebi2023empirical,
  title={An empirical comparison of explainable artificial intelligence methods for clinical data: a case study on traumatic brain injury},
  author={Nayebi, Amin and Tipirneni, Sindhu and Foreman, Brandon and Reddy, Chandan K and Subbian, Vignesh},
  booktitle={AMIA annual symposium proceedings},
  volume={2022},
  year={2023}
}

@article{linardatos2020explainable,
  title={Explainable ai: A review of machine learning interpretability methods},
  author={Linardatos, Pantelis and Papastefanopoulos, Vasilis and Kotsiantis, Sotiris},
  journal={Entropy},
  year={2020},
  publisher={MDPI}
}

@article{islam2021explainable,
  title={Explainable artificial intelligence approaches: A survey},
  author={Islam, Sheikh Rabiul and Eberle, William and Ghafoor, Sheikh Khaled and Ahmed, Mohiuddin},
  journal={arXiv preprint arXiv:2101.09429},
  year={2021}
}

@article{ethics_ai2,
title = {Transparency and explainability of {AI} systems: From ethical guidelines to requirements},
journal = {Information and Software Technology},
year = {2023},
author = {Nagadivya Balasubramaniam and Marjo Kauppinen and Antti Rannisto and Kari Hiekkanen and Sari Kujala},
}

@article{intrinsic_posthoc,
author = {Du, Mengnan and Liu, Ninghao and Hu, Xia},
title = {Techniques for interpretable machine learning},
year = {2019},
publisher = {Association for Computing Machinery},
journal = {Commun. ACM},
}

@article{esteva2019guide,
  title={A guide to deep learning in healthcare},
  author={Esteva, Andre and Robicquet, Alexandre and Ramsundar, Bharath and Kuleshov, Volodymyr and DePristo, Mark and Chou, Katherine and Cui, Claire and Corrado, Greg and Thrun, Sebastian and Dean, Jeff},
  journal={Nature medicine},
  volume={25},
  number={1},
  year={2019},
  publisher={Nature Publishing Group US New York}
}

@article{cukierski2012titanic,
  title={Titanic-machine learning from disaster},
  author={Cukierski, Will},
  journal={Kaggle. available at: https://kaggle. com/competitions/titanic},
  year={2012}
}

@article{agarwal2022openxai,
  title={Openxai: Towards a transparent evaluation of model explanations},
  author={Agarwal, Chirag and Krishna, Satyapriya and Saxena, Eshika and Pawelczyk, Martin and Johnson, Nari and Puri, Isha and Zitnik, Marinka and Lakkaraju, Himabindu},
  journal={Advances in neural information processing systems},
  volume={35},
  year={2022}
}

@article{parola2026human,
  title={Human-centered xai via a concept-informed prompt-based validation framework for saliency maps [ciprova]},
  author={Parola, Marco and Alfeo, Antonio Luca and Cimino, Mario GCA},
  journal={Image and Vision Computing},
  year={2026},
  publisher={Elsevier}
}

@conference{daml24,
author={Chutong Huang},
title={The Prediction and Feature Importance Investigation in Titanic Survival Prediction},
booktitle={Proceedings of the 2nd International Conference on Data Analysis and Machine Learning - Volume 1: DAML},
year={2024},
publisher={SciTePress},
}

@article{scikit-learn,
  title   = {Scikit-learn: Machine Learning in {P}ython},
  author  = {Pedregosa, F. and Varoquaux, G. and Gramfort, A. and
             Michel, V. and Thirion, B. and Grisel, O. and
             Blondel, M. and Prettenhofer, P. and Weiss, R. and
             Dubourg, V. and Vanderplas, J. and Passos, A. and
             Cournapeau, D. and Brucher, M. and Perrot, M. and
             Duchesnay, E.},
  journal = {Journal of Machine Learning Research},
  volume  = {12},
  year    = {2011}
}

\end{document}